\documentclass[10pt]{article}
\usepackage[a4paper,margin=1in]{geometry}
\usepackage{lipsum}

\usepackage{kotex}
\usepackage{xcolor,colortbl}
\usepackage{xspace}
\usepackage{adjustbox}
\usepackage{multirow}
\usepackage{graphicx}
\usepackage{amsmath}
\usepackage{subcaption}
\usepackage{amsthm} 
\newtheorem{proposition}{Proposition} 

\definecolor{tabfirst}{rgb}{1, 0.7, 0.7}
\definecolor{tabsecond}{rgb}{1, 0.85, 0.7}
\definecolor{tabthird}{rgb}{1, 1, 0.7}

\usepackage{pifont}
\usepackage[utf8]{inputenc} 
\usepackage[T1]{fontenc}    
\usepackage{hyperref}       
\usepackage{url}            
\usepackage{booktabs}       
\usepackage{amsfonts}       
\usepackage{nicefrac}       
\usepackage{microtype}      
\usepackage{xcolor}         
\usepackage{lipsum}
\usepackage{wrapfig, booktabs}
\usepackage{booktabs}    
\usepackage{subcaption}  
\usepackage{tabularx}
\usepackage{booktabs} 
\usepackage{amsmath, amsthm, amssymb}
\newtheorem{remark}{Remark}

\usepackage{natbib}
\usepackage{caption}
\newtheorem{definition}{Definition}

\title{Error-Conditioned Neural Solvers}

\author{%
\normalsize
  Haina Jiang\textsuperscript{*1} \quad
  Liam Wang\textsuperscript{*1} \quad
  Peng-Chen Chen\textsuperscript{1} \quad
  Min Seop Kwak\textsuperscript{1,2} \quad\\[1.2ex]
    \normalsize
  Seungryong Kim\textsuperscript{2} \quad
  Brian Bell\textsuperscript{3} \quad
  Jeong Joon Park\textsuperscript{1,\textdagger} \\[1.2ex]
  \normalsize
  \textsuperscript{1}University of Michigan \quad
  \textsuperscript{2}KAIST AI\quad
  \textsuperscript{3}Los Alamos National Laboratory\\[0.5ex]
    \footnotesize\textsuperscript{*}Equal contribution. \quad \textsuperscript{\textdagger}Corresponding author.%
}
\begin{document}

\date{}

\maketitle

\begin{abstract} Neural surrogate models offer fast approximate mappings from PDE parameters to solutions, but they typically treat solving as a purely statistical task: once trained, they struggle to correct their own constraint violations and extrapolate beyond the training distribution. Recent hybrid methods promote physical correctness by targeting the PDE residual via gradient descent or Gauss--Newton steps, but inherit the compute cost and instability of the underlying classical optimizers. We show, theoretically and empirically, that numerically minimizing the PDE residual can be an unreliable proxy for reconstruction accuracy in ill-conditioned systems, explaining why these methods often do not make accurate predictions despite achieving low residuals. We propose error-conditioned Neural Solvers (ENS), built on a different principle: rather than an optimization target, the PDE residual field is passed as a direct input to the network at each iteration, enabling it to read the spatial structure of its own errors and learn an update policy to iteratively correct its predictions. Across four PDE families, ENS attains the highest prediction accuracy in the large majority of settings, with gains reaching $10\times$ on turbulent Kolmogorov flow, while avoiding the expensive compute cost of hybrid methods. ENS's learned correction policy generalizes under distribution shift, including zero-shot parameter changes and cross-equation transfer, where its relative advantage is largest in the ill-conditioned regimes where residual minimization is least reliable. Project website: \href{https://neuralsolver.github.io/}{neuralsolver.github.io}\end{abstract}

\begin{figure*}[t]
\centering
\includegraphics[width=\linewidth]{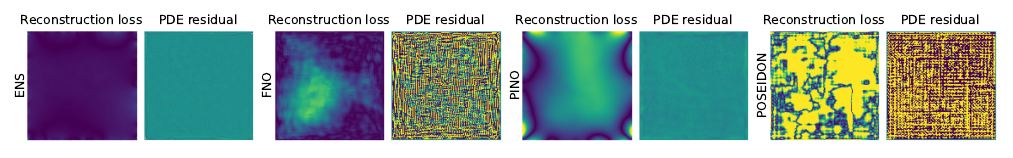}
\caption{
\textbf{Prediction and Physical accuracy of PDE solvers.} For each method we show the reconstruction
loss ($L_1$  error) and PDE residual field $r(u) = \mathcal{F}(u;f)$ computed on the same test instance.
Feed-forward solvers (FNO, POSEIDON) exhibit large residual fields, indicating
systematic violation of the governing equations. While the test-time PDE optimization of PINO reduces the PDE error, it does not translate to accurate solution fields (following Prop.~\ref{prop:residual_recon}). Our ENS produces both near-perfect reconstruction and physical accuracy. Corresponding fields are visualized on the same color scale.
}
\label{fig:motivation_consistency}
\end{figure*}

\section{Introduction}

Neural operators dramatically accelerate PDE solving by replacing expensive numerical solvers with fast, learned mappings from parameters to solution fields~\citep{lu2021learning, li2020fourier}. Yet they typically treat solving as a purely statistical regression problem, which introduces two fundamental limitations: they typically have no mechanism to assess or correct constraint violations in their own outputs and struggle to extrapolate reliably beyond the training distribution. Traditional numerical methods~\citep{aliabadi2020boundary, solin2005partial} avoid these by construction, but their cost scales rapidly with resolution and dimensionality, making them the primary bottleneck in large-scale simulation applications.

Recent hybrid methods incorporate the PDE residual at test time to impose physics-based corrections. These methods share a common structure: the residual is used as an optimization target via gradient descent in PINO~\citep{li2024physics}, or constraint-manifold projection in DiffusionPDE~\citep{huang2024diffusionpde} and PCFM~\citep{utkarsh2025physics}, relying on expensive numerical procedures to translate residual information into solution corrections. Moreover, we show theoretically and empirically that this numerical optimization can expose these hybrid methods to a residual-reconstruction gap: minimizing the PDE residual is often an unreliable proxy for solution accuracy in ill-conditioned systems (Prop.~\ref{prop:residual_recon}), explaining why they tend to achieve low prediction accuracy despite achieving low residuals.

We present error-conditioned \textit{Neural Solvers} (ENS), a new framework built on a different principle: the PDE residual field should be a direct input to the network, instead of an optimization target. At each iteration, ENS receives the current solution prediction alongside its PDE residual field, and learns to produce a correction by directly reading the spatial structure of the error. ENS is trained under reconstruction supervision alone, where the residual is an input signal rather than an objective to minimize, directly sidestepping the residual-reconstruction gap (Prop.~\ref{prop:residual_recon}) that undermines hybrid methods. Applied recurrently, ENS progressively refines the solution without explicit numerical optimization, with computation cost cheaper than existing hybrid methods, and with robustness to initialization that enables generalization to unseen equations.

We evaluate ENS across four PDE families and increasingly challenging regimes—in-distribution prediction, super-resolution, coefficient extrapolation, and cross-equation transfer. ENS attains the most accurate predictions in most of the tested settings, by up to an order of magnitude on the most ill-conditioned problems, and achieves low PDE residuals together with low reconstruction error (Fig.~\ref{fig:motivation_consistency}), despite never directly minimizing the residual. Under distribution shift, ENS's advantage is largest in the ill-conditioned regimes where minimizing the residual is least reliable: data-driven operators degrade sharply, while residual-minimizing methods remain competitive in benign regimes but are slow and lose their advantage as conditioning worsens. ENS further exhibits initialization-robust convergence: trajectories spanning seven orders of magnitude in initial residual converge to the same residual floor (Fig.~\ref{fig:motivation_convergence}), a reliability property consistent with its strong extrapolation in ill-conditioned settings.

Our contributions are: (i) ENS, a recurrent residual-conditioned \textit{Neural Solver} achieving up to an order-of-magnitude improvement in reconstruction accuracy across diverse extrapolation settings; (ii) a characterization of ENS, including initialization robustness, ablation experiments, and its design choices; and (iii) a theoretical and empirical analysis of the residual-reconstruction gap (Prop.~\ref{prop:residual_recon}), explaining why existing hybrid methods often struggle to recover accurate solutions despite achieving low residuals.

\section{Related Work}
\label{sec:background}
Neural operators learn mappings between function spaces for fast prediction across PDE families \citep{li2020fourier, lu2021learning, kovachki2023neural}, with foundation models extending this to large-scale multi-PDE pretraining \citep{herde2024poseidon}. These feed-forward models are \emph{physics-blind} at inference: they receive no feedback about their own prediction error and can incur large constraint violations even when outputs appear plausible. PINNs \citep{raissi2019physics} instead impose physics on a single instance but require per-instance retraining (Fig.~\ref{fig:motivation_convergence}).

To add inference-time physics feedback, hybrid methods use the PDE residual as a correction signal. Optimization-based methods reach a low residual through a separate inference-time procedure---parameter fine-tuning \citep{li2024physics}, residual-guided diffusion \citep{huang2024diffusionpde, jacobsen2025cocogen}, constraint projection \citep{christopher2024constrained, utkarsh2025physics}, or zero-shot constrained sampling \citep{cheng2025gradient}---while analytical methods apply a closed-form accelerated linearized correction at the prediction \citep{cao2023residual, jha2024residual, huang2026physicscorrect}. In every case the residual is an \emph{optimization target}, not an \emph{input} to the network; we analyze the resulting cost and reliability in Sec.~\ref{sec:existing} and Tab.~\ref{tab:comparison}. Iterative learned refinement has also been studied without residual conditioning, using noise levels \citep{lippe2023pde} or corrections embedded in a coarse solver \citep{wei2026inc}. 

Concurrently to our work, PRISMA \citep{sawhney2025beyond} moves residual information inside a diffusion neural operator via spectral attention, sharing our motivation. It differs in both mechanism and problem: PRISMA's residual is computed from noisy observations and conditions a generative denoiser for inverse problems, whereas ENS computes $r^{(k)}$ from the current prediction and feeds it to a corrector that produces an explicit update, closing the feedback loop of Def.~\ref{def: neural_solver} for forward solving under distribution shift.

Appendix Sec.~\ref{sec: rel_appendix} provides detailed descriptions and covers generative and unrolled-network methods.
\section{The Unified View of Hybrid Residual Correction Methods}
\label{sec:existing}
The hybrid methods in Sec.~\ref{sec:background} use the PDE residual as an error signal at inference time. Although they differ in detail, they share a common structure: the residual is used either as a target for gradient-based updates or as the input to a linear solve. Tab.~\ref{tab:comparison} groups them by the classical numerical scheme each one corresponds to. We show that this shared structure can lead to two failure modes: minimizing the residual can be an unreliable proxy for reconstruction accuracy, and the second-order updates they rely on are stable only close to the solution.

\begin{table*}[h!]
\centering
\small
\caption{\textbf{Taxonomy of PDE solving methods by how the residual enters the network.} 
Existing methods use the residual as a training signal, external gradient, external 
projection, or external linear solve. ENS is the first \textit{Neural Solver} to receive 
the residual as a direct input. Inference costs are measured on 20-frame Navier-Stokes scenes. Init.\ sensitivity indicates whether convergence requires 
initialization close to $u^\ast$.}
\label{tab:comparison}
\begin{tabular}{lcccc}
\toprule
\textbf{Method} & \textbf{Baselines} & \textbf{Residual Usage} & \textbf{Inference Cost (s.)} 
& \textbf{Init.\ Sensitive} \\
\midrule
Feed-forward & FNO, POSEIDON & None (training only) & $0.0086, 3.5$ & \ding{55} \\
GD-PINNs & PINO & External gradient & $44$ & \ding{51} \\
GD optim. & DiffusionPDE & External gradient & $191$ & \ding{51} \\
GN projection & PCFM & External projection & $251$ & \ding{51} \\
\midrule
ENS (Ours) & --- & Network input & $0.19$ & \ding{55} \\
\bottomrule
\end{tabular}
\end{table*}

\paragraph{First-order.} PINO~\citep{li2024physics} performs PINNs-style gradient descent through backpropagation on the residual loss:
\begin{equation}
\hat{\theta} \leftarrow \hat{\theta} - \alpha\nabla_\theta\|r(\hat{u})\|^2.
\end{equation}
DiffusionPDE~\citep{huang2024diffusionpde} applies the first-order update as diffusion guidance:
\begin{equation}
\hat{\mathbf{x}}_{t-1} = \mathrm{denoise}(\mathbf{x}_t) - \gamma\nabla_{\mathbf{x}_t}\|r(\hat{u}_0)\|^2.
\end{equation}
In both cases the correction is applied externally, with the residual used as an optimization target.

\paragraph{Second-order (Gauss-Newton).} PCFM~\citep{utkarsh2025physics} applies a penalized Gauss-Newton correction at each flow step, balancing proximity to the current estimate $\hat{u}_{\tau'}$ with soft constraint satisfaction evaluated at the flow-extrapolated point $u + \gamma v_\theta(u, \tau')$, where $\gamma = 1 - \tau'$:
\begin{equation}
u_{\tau'} = \arg\min_{u}\,\|u - \hat{u}_{\tau'}\|^2 + \lambda\left\|h\!\left(u + \gamma v_\theta(u, \tau')\right)\right\|^2.
\end{equation}
PCFM's scheme inherits Gauss-Newton's initialization sensitivity and, for full PDE residuals, per-step cost grows to $O(n^2)$--$O(n^3)$---equivalent to or worse than classical solvers; PCFM's claimed $O(n)$ cost holds only for low-dimensional constraints ($m \ll n$) and does not extend to full PDE residuals. Moreover, the regularization term $\|u - \hat{u}_{\tau'}\|^2$ is symptomatic of optimization pitfalls below shared by the hybrid methods. 

\begin{proposition}[Residual-Reconstruction Gap]
\label{prop:residual_recon}
For a PDE with true solution $u^\star$ satisfying $r(u^\star) \approx 0$, minimizing $\|r(u)\|_2$ does not reliably reduce reconstruction error. Specifically, let $J_r(u^\star)$ denote the Jacobian of $r$ at the true solution $u^\star$, with smallest singular value $\sigma_{\min}(J_r(u^\star))$. For any $\varepsilon > 0$ there exists $\tilde{u}$ with $\|r(\tilde{u})\|_2 \leq \varepsilon$ yet:
\begin{equation}
\|\tilde{u} - u^\star\|_2 \;\geq\; \frac{\varepsilon}{\sigma_{\min}(J_r(u^\star))}.
\end{equation} 
When $\sigma_{\min}$ is small---as in high-wavenumber Helmholtz or low-viscosity Navier-Stokes---residual minimization becomes an arbitrarily unreliable proxy for reconstruction accuracy. This follows from classical conditioning theory; we apply it to explain why existing hybrid methods achieve low residuals yet poor reconstruction for ill-conditioned systems.
\end{proposition}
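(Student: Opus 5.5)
We plan to perturb $u^\star$ along the worst-conditioned direction of the linearized residual map. Let $J := J_r(u^\star)$ and take a singular value decomposition, with $v_{\min}$ the unit right singular vector for $\sigma_{\min} := \sigma_{\min}(J) > 0$, so $\|J v_{\min}\|_2 = \sigma_{\min}$. Define the candidate
\[
\tilde u := u^\star + t\, v_{\min}, \qquad t := \frac{\varepsilon}{\sigma_{\min}} ,
\]
possibly with $t$ adjusted slightly in the nonlinear case. By construction $\|\tilde u - u^\star\|_2 = t = \varepsilon/\sigma_{\min}$, which is the claimed lower bound. It remains to show that $\|r(\tilde u)\|_2 \le \varepsilon$.

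\textbf{Linear (or affine) case.} If $r(u) = Ju - f$, which covers the linear Helmholtz and Poisson discretizations, then exactly
\[
r(\tilde u) = r(u^\star) + tJv_{\min}.
\]
With $r(u^\star)=0$ this gives $\|r(\tilde u)\|_2 = t\sigma_{\min} = \varepsilon$, and the proof is complete. If instead $r(u^\star) = \delta$ is only approximately zero, the triangle inequality gives $\|r(\tilde u)\|_2 \le \|\delta\| + \varepsilon$. This is fixed by relabelling $\varepsilon$, or by choosing the sign of $t$ so the perturbation is not aligned adversarially. We will state explicitly that "$r(u^\star)\approx 0$" is interpreted as $r(u^\star)=0$, up to this bookkeeping.

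\textbf{Nonlinear case.} This is the main obstacle, for instance Navier--Stokes. Here we assume $r$ is $C^2$ near $u^\star$ with Hessian bound $L$. Taylor's theorem then gives
\[
\|r(\tilde u)\|_2 \le t\sigma_{\min} + \tfrac{L}{2}t^2 = \varepsilon + \tfrac{L}{2}\varepsilon^2/\sigma_{\min}^2 .
\]
The quadratic term breaks the exact inequality, and we will try two remedies.
\begin{enumerate}
\item Restrict to a small range $\varepsilon \le \varepsilon_0$. Replace the step by $t = \varepsilon/(\sigma_{\min}+\eta)$ and choose $\eta$ so that the remainder is absorbed. This yields the bound $\|\tilde u - u^\star\|_2 \ge \varepsilon/(\sigma_{\min}+\eta)$, which matches the statement to first order as $\varepsilon\to 0$.
\item Use continuity. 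The function $s \mapsto \|r(u^\star + s v_{\min})\|$ is continuous and equals $0$ at $s=0$. Take the largest $s$ with residual at most $\varepsilon$ and argue via the derivative bound that $s \ge \varepsilon/\sigma_{\min}$ asymptotically.
\end{enumerate}
Neither remedy gives the exact constant for all $\varepsilon$ without further structure. We will therefore prove the statement exactly in the linear setting and as a first-order (asymptotic) statement in the nonlinear setting, noting that this is the standard conditioning argument the proposition invokes.

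Finally, we will remark on why this is informative. The ratio of error to residual along $v_{\min}$ is $1/\sigma_{\min}$. For high-wavenumber Helmholtz, $\sigma_{\min}$ of $\Delta + k^2$ is the distance from $k^2$ to the Laplacian spectrum, which can be arbitrarily small. The gap is therefore unbounded, and residual minimization gives no uniform control over reconstruction accuracy.
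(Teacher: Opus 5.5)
Your argument is the paper's: perturb $u^\star$ along the unit right singular vector for $\sigma_{\min}$, which gives the bound exactly in the affine case. In the nonlinear case you, like the paper, use a second-order Taylor estimate with a Hessian constant and a shortened step. The paper's choice $\alpha=\varepsilon/(2\sigma_{\min})$ for $\varepsilon<4\sigma_{\min}^2/M$ actually establishes only $\|\tilde u-u^\star\|_2=\varepsilon/(2\sigma_{\min})$, whereas your step $t=\varepsilon/(\sigma_{\min}+\eta)$ (e.g.\ $\eta=L\varepsilon/(2\sigma_{\min})$ suffices) recovers the stated constant as $\varepsilon\to 0$. Your caveat that the exact bound for every $\varepsilon>0$ is unattainable in the nonlinear case is right: for $r(u)=\sigma u+u^3$ on $\mathbb{R}$ one has $|r(u)|>\sigma|u|$ for all $u\neq 0$.
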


Geometrically, the zero-residual manifold is nearly flat along the smallest singular direction of $J_r(u^\star)$, permitting large deviations from $u^\star$ at negligible residual cost. This analysis explains why the low PDE-residuals of hybrid methods do not always lead to superior reconstruction accuracy (Tab.~\ref{tab:forward_NS}).

\begin{remark}[Initialization Sensitivity of Linearization Methods]
\label{rem:linearization}
Let $M$ bound the curvature $\|D^2 r\|_{\mathrm{op}}$ near $u^\star$. Newton and Gauss--Newton corrections are contractive only within a basin of radius $O(\sigma_{\min}(J_r(u^\star))/M)$ around the true solution $u^\star$~\citep{deuflhard2011newton}.
\end{remark}
Linearization-based hybrid methods can therefore grow unreliable outside the basin (see Fig.~\ref{fig:motivation_convergence}a, Tab.~\ref{tab:Darcy}), e.g., when distribution shift pushes the initialization far from $u^\star$. ENS sidesteps this: its learned corrector $\mathcal{C}_\phi$ replaces local linearization with a nonlinear map over the full residual field, and training on reconstruction loss avoids the objective-level failure of Prop.~\ref{prop:residual_recon}.
\begin{figure*}[t]
    \centering
    \begin{minipage}[t]{0.725\linewidth}
        \includegraphics[width=1.0\linewidth]{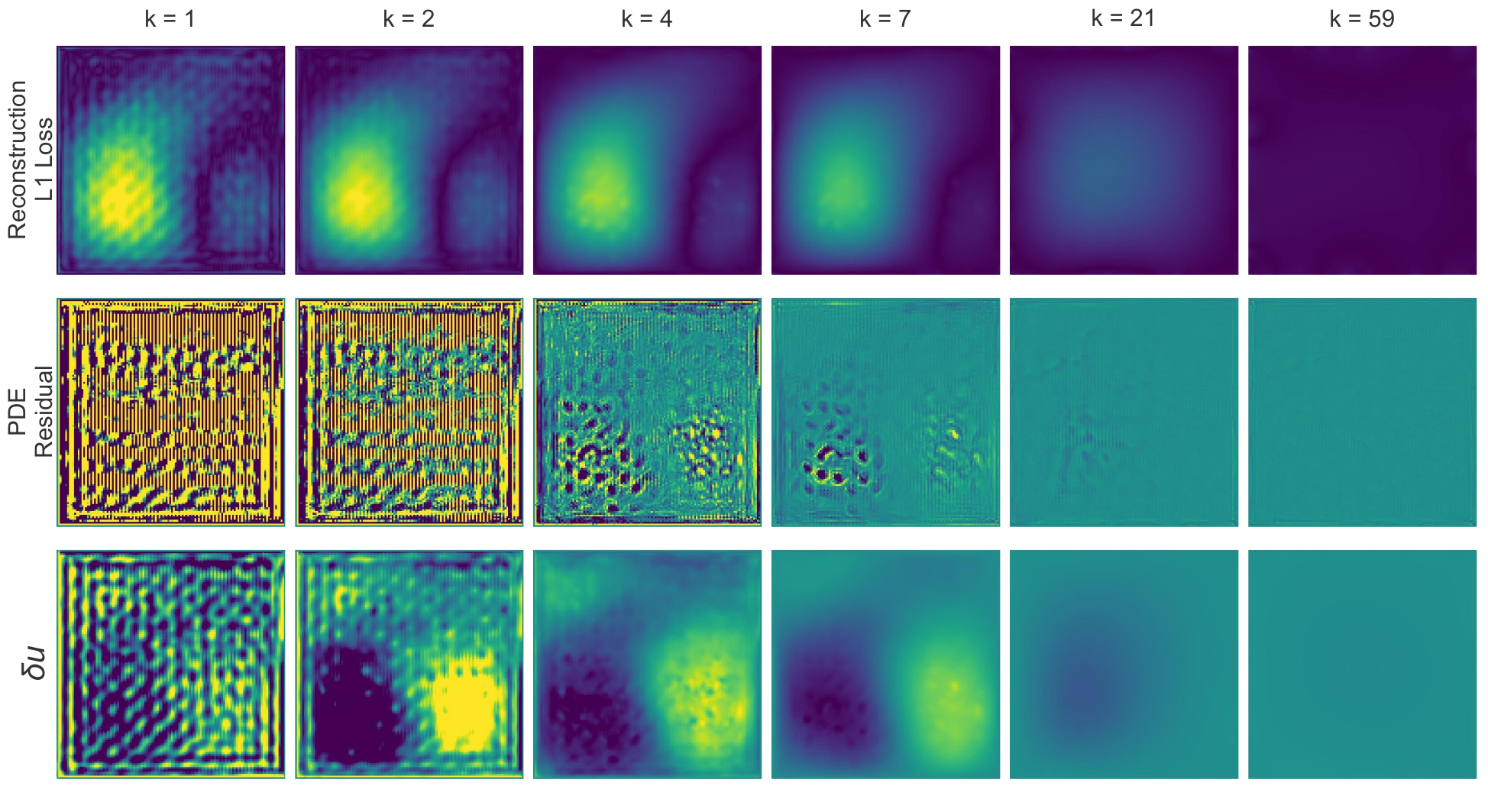}
\caption{
\textbf{Evolution of reconstruction loss, PDE residual $r(\hat{u}^{(k)})$, \\and learned correction $\delta\hat{u}^{(k)}$ across iterations}. Note that the correction $\delta\hat{u}^{(k)}=\mathcal{C}_\phi(r(\hat{u}^{(k)}),\cdot)$ reflects the spatial structure of $r(\hat{u}^{(k)})$ and drives down the PDE residual as well as the reconstruction prediction error.
}
\label{fig:progression}
    \end{minipage}
    \hfill 
    \begin{minipage}[t]{0.26\linewidth}
        \centering
        \includegraphics[width=\linewidth]{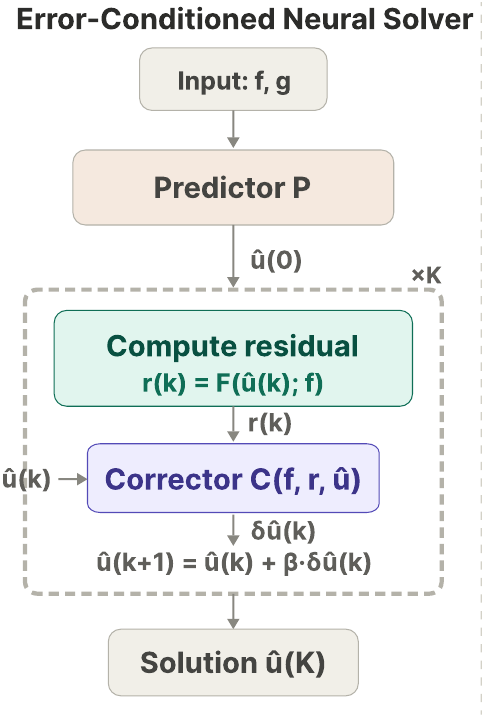} 
        \caption{\textbf{ENS inference-time schematic diagram}.}
        \label{fig:right}
    \end{minipage}
\end{figure*}

\begin{figure}[t]
        \centering
        \includegraphics[width=0.85\linewidth]{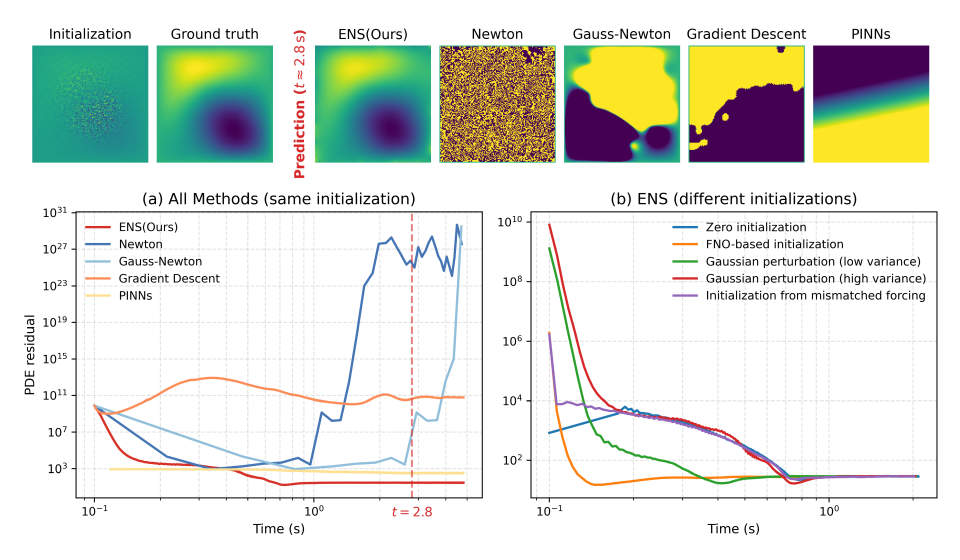} 
\caption{\textbf{PDE residual vs.\ compute time on the nonlinear Helmholtz equation.} \emph{Top right:} predictions at $t=2.8$\,s. \emph{Bottom (a):} residual trajectories; all methods except PINNs share the noisy initialization (top-left). Pure Newton and Gauss-Newton diverge ($>10^{20}$) from Jacobian unreliability far from the solution; PINNs converge very slowly; gradient descent stagnates near its initialization. Recent hybrid methods (DiffusionPDE, PINO-TTOP, PCFM) inherit these limitations, motivating ENS's learned correction scheme. ENS reduces the residual by over an order of magnitude within the same budget and converges to a common floor from initial residuals spanning 7 orders of magnitude (\emph{Bottom (b)}).}       \vspace{-0.3cm}
        \label{fig:motivation_convergence}
\end{figure}

\section{Error-Conditioned Neural Solvers}
\label{sec:rns}

The methods surveyed above share a common structure: whether via gradient descent or analytical linearization, hybrid methods apply residual as an optimization target rather than read directly as a dynamic error signal, while feed-forward neural operators do not see the residual at all. We propose \textbf{Neural Solvers}, a new class of methods with a simple principle: \textit{the network should read and correct its own errors,} akin to the iterative corrections of classical solvers.
\begin{definition}[Neural Solver]\label{def: neural_solver}
A \textbf{Neural Solver} consists of an initialization $\hat{u}^{(0)}$ and a learned corrector $C_\phi$ that iteratively refines the solution $\hat{u}^{(k)}$ by reading its explicit 
error signal $e^{(k)}$:
\begin{equation}
    \hat{u}^{(k+1)} = \hat{u}^{(k)} + \beta \cdot C_\phi\!\left(\hat{u}^{(k)},\, e^{(k)};\, f, g\right)
    \label{eq:neural_solver}, \qquad \text{for } k = 0, \ldots, K-1
\end{equation}
where $e^{(k)}$ encodes the current prediction's failure to satisfy a 
constraint, and $\beta > 0$ is a step size.
\end{definition}

\textbf{Neural Operators} correspond to the special case where $K=0$ in Def.~\ref{def: neural_solver}---they produce a solution estimate $\hat{u}^{(0)} = \mathcal{N}_\theta(f, g)$ in a feed-forward pass, with no corrector applied. Without error feedback, the network cannot assess or correct its own prediction at inference time, regardless of its depth or capacity.

The choice of error signal $e^{(k)}$ defines a family of Neural Solvers---for instance, the PDE residual field for physics constraint violation, observation mismatch for inverse problems, or symmetry residuals for equivariance constraints. In this work, we instantiate $e^{(k)}$ as the PDE residual field $r^{(k)} = \mathcal{F}(\hat{u}^{(k)}; f)$, the most direct available measure of physical constraint violation.
We also refer this instantiation as \textbf{ENS} (Error-Conditioned Neural Solver). A predictor network $\mathcal{P}_\theta$ produces an initial estimate, and a learned corrector $\mathcal{C}_\phi$ iteratively refines it by reading the residual field:
\begin{equation}
\hat{u}^{(0)} = \mathcal{P}_\theta(f, g), \qquad \delta\hat{u}^{(k)} = \mathcal{C}_\phi\!\left(\hat{u}^{(k)},\, r^{(k)};\, f,\, g\right), \qquad \hat{u}^{(k+1)} = \hat{u}^{(k)} + \beta\,\delta\hat{u}^{(k)}
\label{eq:rns_update}
\end{equation}
where $r^{(k)} = \mathcal{F}(\hat{u}^{(k)};f)$ is recomputed at each step. The residual field enters $\mathcal{C}_\phi$ as a spatial input channel---the network reads where and how the solution currently violates the governing equations and learns a nonlinear correction policy over it (Fig.~\ref{fig:progression}). The characterizations of ENS are as follow.

\paragraph{Learned correction over local linearization.}
Classical Newton iteration computes corrections by solving $J_r(\hat{u}^{(k)})\delta u^{(k)} = -r^{(k)}$, inverting the local Jacobian at each step. This is unreliable far from $u^\ast$ (Remark ~\ref{rem:linearization}) and prohibitively expensive for large systems. ENS replaces Jacobian inversion with $\mathcal{C}_\phi$, which learns a nonlinear correction policy across training that generalizes across initializations and equation instances without requiring Jacobian computation at inference. Unlike existing methods that treat the residual as an optimization target, ENS treats it as a reference input, sidestepping the residual reconstruction gap (Prop.~\ref{prop:residual_recon}).

\paragraph{Training.} We run the full correction loop and supervise every intermediate prediction against the ground truth (Fig.~\ref{fig:right}): \begin{equation} \mathcal{L} = \frac{1}{K+1}\sum_{k=0}^{K}\frac{1}{n}\left\|\hat{u}^{(k)} - u_{\mathrm{gt}}\right\|_2^2, \end{equation} where $\hat{u}^{(0)}=\mathcal{P}_\theta(f,g)$ is the predictor output and $\hat{u}^{(k)}$ for $k\ge 1$ are successive corrections; we use $K=5$ correction steps in training, giving six supervised predictions. All trainable components, the predictor $\mathcal{P}_\theta$ and the corrector $\mathcal{C}_\phi$, are optimized jointly under $\mathcal{L}$ by backpropagation; we detach the input to each correction step so that gradients do not propagate across iterations. This training scheme exposes $\mathcal{C}_\phi$ to the distribution of residuals it actually encounters at inference, including the compounding errors of imperfect intermediate corrections. The step size $\beta$ is a fixed hyperparameter during training.

At inference, $\mathcal{C}_\phi$ runs stably for more steps than the five used in training: we iterate until the residual norm $\|r(\hat{u}^{(k)})\|_2$ converges (Fig.~\ref{fig:motivation_convergence}b). The step size $\beta$ is reused unchanged across all settings except super-resolution, where the finer evaluation grid alters the scale of the residual and we adopt a smaller $\beta$ (selected by line search on a small held--out  set). Apart from this, only the number of correction steps varies across settings. 

We instantiate ENS with a modified-FNO backbone for the static equations and a transformer-based backbone \citep{li2025videopde} for the turbulent flows. Consistent with Prop.~\ref{prop:residual_recon}, adding a PDE-residual loss term during training did not improve reconstruction accuracy in our settings, so we train on $\mathcal{L}$ alone.

\begin{remark}[Initialization Robustness]
\label{rem:initialization}
ENS drives $\|r(\hat{u}^{(k)})\|_2$ to a consistent floor independently of initialization, as verified empirically in Fig.~\ref{fig:motivation_convergence}b across initializations spanning seven orders of magnitude in initial residual. Remarkably, although $\mathcal{C}_\phi$ is strictly initialized with $\mathcal{P}_\theta$ during training, it generalizes to diverse corrupted initializations, signaling that ENS learns a robust correction policy rather than a simple statistical mapping. Note that this independence holds for the PDE residual but not the reconstruction loss: by Prop.~\ref{prop:residual_recon}, different initializations may converge to different points on the low-residual manifold. We observe empirically that this initialization robustness is essential for ENS's strong extrapolation performance, which enables ENS to recover accurate solutions even when the predictor's initial estimate is far from the true solution under distribution shift.
\end{remark}

\section{Experiments}
\label{sec:experiments}

\paragraph{Setup.} We evaluate ENS on four PDE families---linear and nonlinear Helmholtz ($\Delta u + \kappa^2 u + \lambda u^3 = f$), Darcy flow ($-\nabla\cdot(a(\mathbf{r})\nabla u) = f$), Poisson ($\Delta u = af$), and Navier--Stokes in vorticity form ($\partial_\tau w + v\cdot\nabla w = \nu\Delta w + f$, $\nabla\cdot v = 0$) with diagonal-wave forcing for Navier–Stokes and monochromatic forcing for Kolmogorov flow. We train one model per equation on an in-distribution dataset and testing on held-out in-distribution data plus four out-of-distribution regimes: super-resolution ($128\!\to\!256$), parameter extrapolation (wavenumber, forcing, viscosity), and cross-equation transfer (details in Tab.~\ref{tab:extrap}). We report relative $L_2$ reconstruction error and PDE-residual MSE---by Prop.~\ref{prop:residual_recon} these are not interchangeable in ill-conditioned regimes, where the residual can be driven low while the solution stays inaccurate.

\paragraph{Models.} ENS pairs a predictor $\mathcal{P}_\theta$, which produces the initial estimate, with a recurrent corrector $\mathcal{C}_\phi$, which reads the current prediction together with its PDE residual and outputs a refinement applied over $K$ steps (Sec.~\ref{sec:rns}). Both are instantiated as FNOs with CNN lifting and projection layers for the static PDEs, and as a transformer-based VideoPDE backbone~\citep{li2025videopde} for the turbulent Navier--Stokes and Kolmogorov flows. Baselines---FNO~\citep{li2020fourier}, PINO~\citep{li2024physics} with and without test-time optimization (TTOP), POSEIDON~\citep{herde2024poseidon}, DiffusionPDE~\citep{huang2024diffusionpde}, and PCFM~\citep{utkarsh2025physics}---use the authors' official implementations and defaults settings (PINO-TTOP to residual convergence in $500/1{,}000$ steps; DiffusionPDE's two-frame representation; PCFM's Gauss--Newton projection over $50$ flow steps); we omit methods these already compare. The static FNO and PINO baselines keep their original lifting and projection rather than ENS's CNN variant: adding the CNN does not clearly improve their prediction accuracy with mixed signals (Tab.~\ref{tab:HZ_AB}), so they are run in their original configurations. Finally, we follow PINO's implementations for finite-difference and Fourier-space residual computations. We refer to supplementary document for details.

\paragraph{Accuracy.} ENS attains the lowest reconstruction error in the large majority of settings while keeping the residual low (Tabs.~\ref{tab:forward_helmholtz}--\ref{tab:forward_NS}; qualitative results in Figs.~\ref{fig: StaticPDEs}--\ref{fig: DynamicPDEs}). On Helmholtz it is best on both metrics across all four regimes. On Navier-Stokes it is best in-distribution and under viscosity- and forcing-shift, regimes where feed-forward operators collapse (POSEIDON's error exceeds $0.5$ under forcing-shift), and on turbulent Kolmogorov flow, the most ill-conditioned family, its margin is largest across all settings, as predicted in Prop.~\ref{prop:residual_recon}. The baselines display the residual--reconstruction gaps that motivates reporting both metrics: residual-minimizing hybrids (PINO-TTOP, PCFM) reach low residuals but leave reconstruction errors high in ill-conditioned regimes (e.g., wavenumber and viscosity increase). Fig.~\ref{fig:jacobian} shows PINO-TTOP's $L_2$ error rising for wavenumber $\kappa=3$ even as its residual falls, while its $L_2$ error slightly decreases for wavenumber $\kappa=1$, when trained on $\kappa=2$. ENS converges for all cases. The one regime where ENS is not the most accurate is NS super-resolution, where it is outperformed by PINO-TTOP, whose spectral (Fourier) parameterization is resolution-invariant by construction, and by POSEIDON, a multiscale transformer that generalizes well across scales, neither property of which ENS's backbone has.

\paragraph{Efficiency.} ENS runs in $0.10$\,s/sample (static) and $0.19$\,s/sample (NS) in-distribution---roughly $100\times$ and $230\times$ faster than PINO-TTOP and orders of magnitude faster than DiffusionPDE and PCFM---while remaining the most accurate learned method. Out-of-distribution settings need more correction steps and thus more runtime, but ENS stays far faster than the hybrid methods and, on the expensive NS family, below the numerical solver ($0.19$\,s in-distribution vs.\ $23.3$\,s/sample). This places it on the accuracy--compute Pareto front against the learned baselines (Fig.~\ref{fig:pareto},~\ref{fig:pareto_kf}); the TTOP- and diffusion-based hybrids sit off the front, spending large compute on residuals that do not translate into accuracy.

\paragraph{Ablations.} Controlled studies isolate the source of ENS's gains; full details are in App.~\ref{app: addition}. \textit{Conditioning, not computation.} With architecture and step count fixed, replacing the residual input with a zero field stalls both losses, whereas ENS reduces both (Fig.~\ref{fig:ablation}): the gains come from reading the residual, not added iterations. \textit{Not the backbone.} On ENS's own backbone, the operator baselines and their PINO and TTOP variants still trail ENS in reconstruction error in every regime (Tabs.~\ref{tab:HZ_AB},~\ref{tab:NS_AB},~\ref{tab:KF_AB}); TTOP instead reproduces the gap of Prop.~\ref{prop:residual_recon}, driving the residual low while reconstruction stays high, mild on Navier--Stokes but severe on Kolmogorov flow, where its $L_2$ error is about $50\times$ ENS's. \textit{Input, not objective.} Conditioning on the physics-loss gradient lowers the residual but stalls reconstruction (Fig.~\ref{fig: gradient}), and a residual training term did not help. \textit{Backbone needs expressivity.} The residual input contains high-frequency information that must be representable by the backbone; on Helmholtz a pure FNO diverges while its CNN-augmented variants converge (Tab.~\ref{tab:Ablation}). Transformer backbones are also expressive enough to converge. \textit{A learned policy.} Although trained with five correction steps, ENS keeps improving past 
100 iteration at inference and stops when the residual plateaus (Fig.~\ref{fig:jacobian}, ~\ref{fig: delta_u}), indicating a learned correction rather than a fixed scheduled mapping.

\begin{table*}[htb]
  \caption{\textbf{Forward prediction results for the Helmholtz problem}. Time refers to the mean per-sample inference latency (In-distribution). Bold indicates the best result, and underlining indicates the second-best. Cross-equation refers to inferencing on non-linear Helmholtz while trained on Poisson. PINO(TTOP) and PCFM attain low PDE residuals but high $L_2$ error, showing the residual--reconstruction gap (Prop.~\ref{prop:residual_recon}).}
  \label{tab:forward_helmholtz}
  \centering
  \scriptsize
  \setlength{\tabcolsep}{3pt}
  \resizebox{\linewidth}{!}{%
  \begin{tabular}{lccccccccc}
    \toprule
    & &\multicolumn{2}{c}{In-distribution} &
    \multicolumn{2}{c}{Wavenumber-shift} & 
    \multicolumn{2}{c}{Cross-equation} &
    \multicolumn{2}{c}{Super-resolution} 
    \\
    \cmidrule(r){3-4} \cmidrule(r){5-6} \cmidrule(r){7-8} \cmidrule(r){9-10}
    Method & Time(s) &$L_2$ Rel & PDE Res & $L_2$ Rel & PDE Res & $L_2$ Rel & PDE Res& $L_2$ Rel & PDE Res\\
    \midrule
ENS (ours) & \underline{0.1} & \textbf{3.04e-03} & \textbf{3.90e-01} &  \textbf{2.59e-02} & \textbf{5.19e-01} & \textbf{8.73e-02} & 6.54e+01 & \textbf{1.82e-02} & \textbf{4.52e-01} \\
FNO & \textbf{0.0065} &  \underline{1.20e-02} & 4.21e+02 &  2.39e-01 & 3.80e+02 & 5.21e-01 &  1.33e+03 & 9.06e-02 & 1.28e+04  \\

PINO & \textbf{0.0065}& 4.19e-02 & 4.98e+01 &  2.43e-01 & 6.02e+01 & 5.27e-01  & 6.42e+03 & 1.29e-01 & 2.24e+03 \\

PINO(TTOP) & 10 & 6.44e-02 & 9.65e+00 & \underline{1.61e-01} & 1.28e+01 & \underline{4.08e-01}  & \underline{3.31e+01} & 1.68e-01 & \underline{4.17e+01}  \\

POSEIDON & 0.18 & 2.70e-02 & 2.57e+04 & 2.40e-01 &  2.35e+04 & 4.80e-01 & 3.27e+04 & 5.17e-01  & 6.02e+04  \\
    DiffusionPDE & 191 & 9.85e-02 & 2.19e+04  & 2.69e-01 & 2.35e+04 & 3.90e-01 & 4.21e+05 & 7.72e-01 & 2.29e+04\\
    
    PCFM & 251 & 1.08e-01 & \underline{3.91e+00} & 1.78e-01 & \underline{7.02e+00} & 6.30e-01  & \textbf{1.62e+01} & OOM & OOM  \\
    \bottomrule
  \end{tabular}
  }
\end{table*}

\begin{table*}[htb]
  \caption{\textbf{Forward prediction results for the Navier-Stokes (20 frames)}. Time refers to the mean per-sample inference latency (In-distribution). For ENS, distribution-shift requires more iterations and thus longer inference latency. DiffusionPDE only predicts the first and last frames, thus unable to compute PDE residuals. PCFM goes out of memory at $256^2$ resolution. The gap is mild here: PINO(TTOP) reaches the lowest residuals yet trails ENS in $L_2$ (Prop.~\ref{prop:residual_recon}) except super-resolution.}
  \label{tab:forward_NS}
  \centering
  \scriptsize
  \setlength{\tabcolsep}{3pt}
  \resizebox{\linewidth}{!}{%
  \begin{tabular}{lccccccccc}
    \toprule
    & & \multicolumn{2}{c}{In-distribution} & \multicolumn{2}{c}{Viscosity-shift} & 
    \multicolumn{2}{c}{Forcing-shift} & 
    \multicolumn{2}{c}{Super-resolution}
    \\
    \cmidrule(r){3-4} \cmidrule(r){5-6} \cmidrule(r){7-8} \cmidrule(r){9-10}
    Method & Time(s) & $L_2$ Rel & PDE Res & $L_2$ Rel & PDE Res & $L_2$ Rel & PDE Res& $L_2$ Rel & PDE Res\\
    \midrule
ENS (ours) & \underline{0.19} & \textbf{2.22e-03} &  \underline{4.69e-05} &  \textbf{3.05e-02} & 1.12e-02 &  \textbf{5.31e-03} & \underline{7.40e-04} & 3.98e-02  & 9.11e-02\\
FNO & \textbf{0.0086} &3.54e-02 & 8.24e-04&  1.36e-01& 1.18e-02 &5.57e-01 &3.09e-02 &7.51e-02 & 8.33e-03\\
PINO & \textbf{0.0086} & 3.57e-02 & 8.45e-04& 1.37e-01 & 1.20e-02 & 5.57e-01 & 3.10e-02 & 7.59e-02 & 8.48e-03 \\
PINO(TTOP) &43.7 & 1.17e-02 & \textbf{2.33e-05} & \underline{5.42e-02}  &  \textbf{3.79e-04} & \underline{1.45e-02} & \textbf{4.39e-05} & \underline{1.73e-02} & \textbf{7.69e-05}   \\
POSEIDON  & 3.5& \underline{4.73e-03} &  6.69e-05 &  7.45e-02 &\underline{1.17e-03} & 5.66e-01 & 2.07e-02 & \textbf{8.83e-03} & \underline{4.45e-04} \\
    DiffusionPDE & 229.22&5.59e-01  & N/A & 6.14e-01& N/A & 7.52e-01 & N/A &  6.83e-01 & N/A \\
    PCFM &192.01 & 1.34e+00&4.79e+02 & 7.18e-01& 8.33e-01& 9.32e-01 & 1.00e+04& OOM & OOM \\
    \bottomrule
  \end{tabular}
  }
\end{table*}

\begin{table*}[h!]
\centering
\caption{\textbf{Forward prediction results for the Kolmogorov flow (40 frames)}. DiffusionPDE only predicts the first and last frames, thus unable to compute PDE residuals. PCFM goes out of memory in our 40-frames settings. Note the severe residual-reconstruction gap (Prop.~\ref{prop:residual_recon}) of PINO(TTOP).} 
\label{tab:kolmogorov}
\small
\resizebox{\linewidth}{!}{
\begin{tabular}{lccccccccc}
    \toprule
    & & \multicolumn{2}{c}{In-distribution} &
    \multicolumn{2}{c}{Viscosity-shift} & 
    \multicolumn{2}{c}{Forcing-shift} &
    \multicolumn{2}{c}{Super-resolution}
    \\
    \cmidrule(r){3-4} \cmidrule(r){5-6} \cmidrule(r){7-8} \cmidrule(r){9-10}
    Method & Time(s) & $L_2$ Rel & PDE Res & $L_2$ Rel & PDE Res &  $L_2$ Rel & PDE Res & $L_2$ Rel & PDE Res\\
    \midrule
ENS(Ours)& \underline{0.43} &\textbf{8.08e-03}  & \underline{2.50e-01}& \textbf{3.42e-02}&\underline{8.05e-01} & \textbf{2.06e-02} &  \underline{3.72e-01} & \textbf{1.98e-02} & 5.55e+00 \\
FNO & \textbf{0.0093} & \underline{5.41e-02} & 1.22e+00 & 3.54e-01& 2.08e+00 & 6.66e-01 & 1.15e+01 &1.60e-01 & \underline{1.49e+00}\\
PINO & \textbf{0.0093} & 8.05e-02& 3.49e+00& 3.13e-01& 3.51e+00 & 7.04e-01 & 1.22e+01 & \underline{1.32e-01}& 4.85e+00 \\
PINO(TTOP) & 72.7 & 2.11e-01& \textbf{4.83e-02}& \underline{4.66e-02} &\textbf{3.18e-03} & \underline{2.06e-01} & \textbf{2.33e-02} & 4.45e+00 & \textbf{7.74e-02}\\
POSEIDON &6.97 & 5.62e-01 & 7.35e+00 &6.16e-01&1.19e+01 & 1.04e+00& 1.97e+01 & 6.65e-01& 1.53e+01\\
DiffusionPDE & 229.27 & 8.19e-01 & N/A & 7.37e-01 & N/A & 8.19e-01 & N/A & 9.45e-01 & N/A \\
PCFM & N/A &OOM & OOM &OOM & OOM& OOM&OOM & OOM& OOM\\
    \bottomrule
\end{tabular}
}
\end{table*}

\begin{figure*}[h!]
    \centering
    \includegraphics[width=0.9\linewidth]{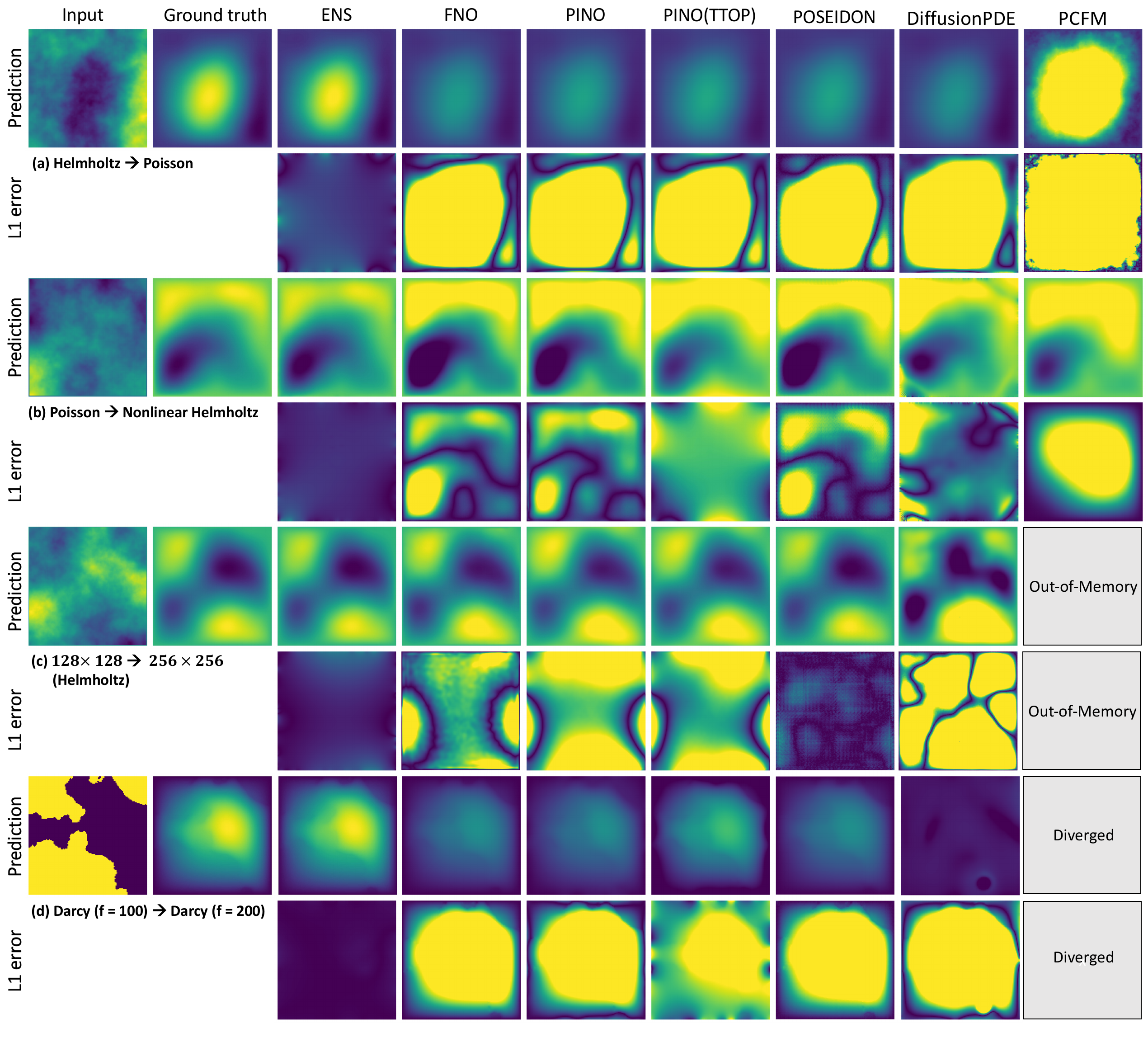}
    \caption{\textbf{Qualitative comparison on static PDEs}. Reconstructed fields (top) and point-wise $L_1$ errors (bottom) across three OOD regimes: cross-equation (HZ \ $\!\to\!$ \ PS, PS \ $\!\to\!$ \ nonlinear HZ); resolution shift(HZ, $128{\times}128\!\to\!256{\times}256$); and forcing shift (Darcy, $f = 100\!\to\!f=200$).} 
    \label{fig: StaticPDEs}
\end{figure*}

\begin{figure*}[htb]
    \centering
    \includegraphics[width=0.9\linewidth]{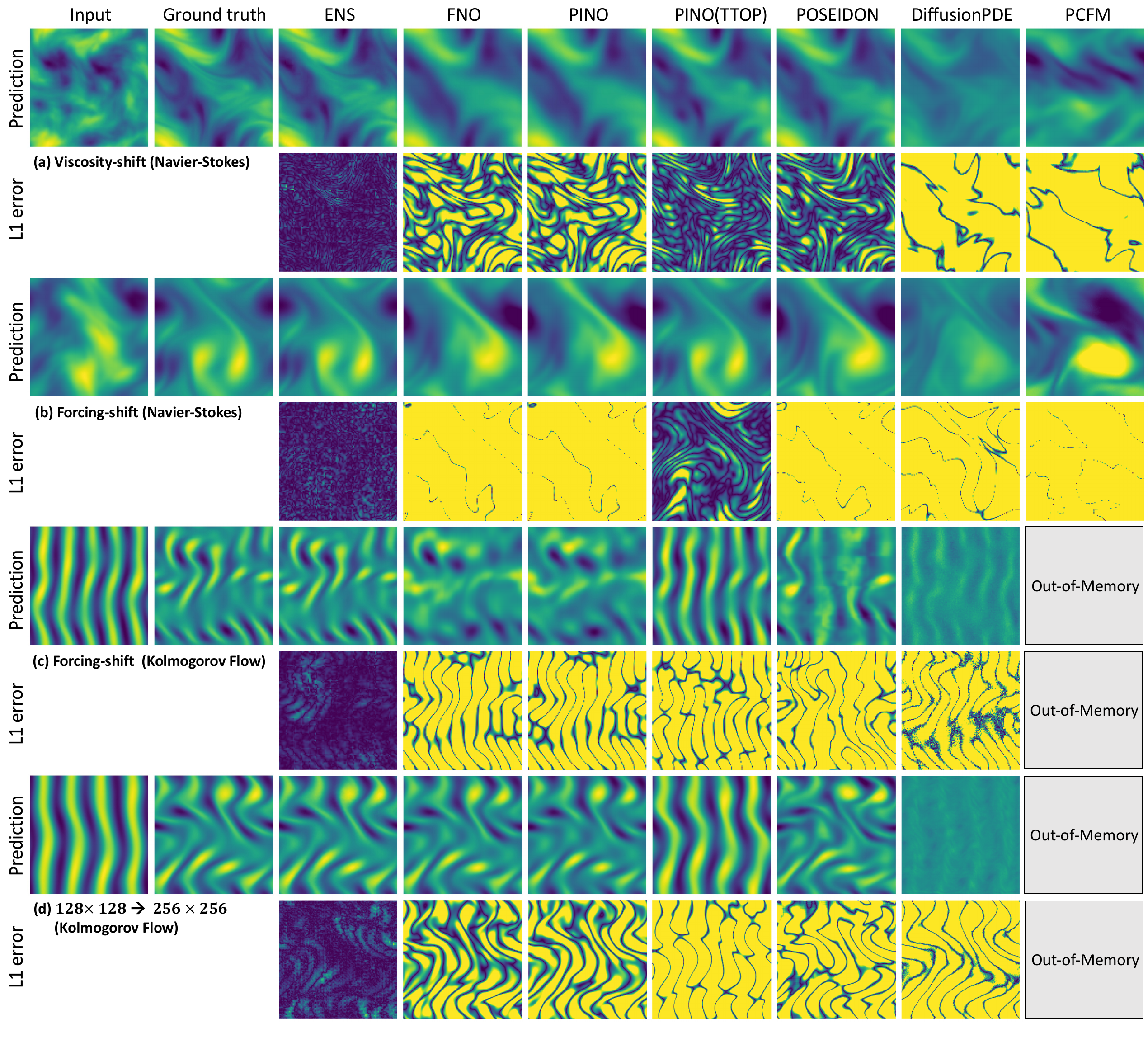}
\caption{\textbf{Qualitative comparison on dynamic PDEs.} Reconstructed fields (top) and point-wise $L_1$ errors (bottom) across three OOD regimes: viscosity shift (NS, $\nu{=}10^{-4}\!\to\!10^{-5}$); forcing shift (NS, $2\pi\!\to\!4\pi$; KF, $8\pi\!\to\!10\pi$); and resolution shift (KF, $128{\times}128\!\to\!256{\times}256$).}
\label{fig: DynamicPDEs}
        \vspace{-1em}
\end{figure*}

\begin{figure}[t]
    \centering

    \begin{minipage}[t]{0.49\linewidth}
        \centering
        \includegraphics[width=\linewidth]{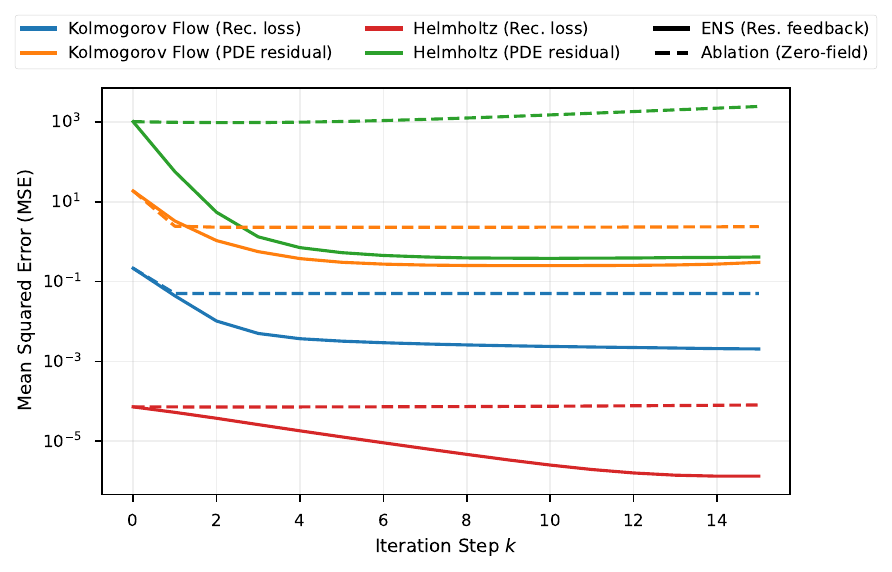}
        \captionof{figure}{\textbf{Effectiveness of residual conditioning.} We compare the loss trajectories for ENS (solid lines) against the ablation baseline (dashed lines) where zero-field is conditioned instead of the PDE residual field. The results show that ENS with residual conditioning is able to reduce both losses, whereas the ablation baseline fails to do so.}
        \label{fig:ablation}
    \end{minipage}
    \hfill
    \begin{minipage}[t]{0.49\linewidth}
        \centering
        \includegraphics[width=\linewidth]{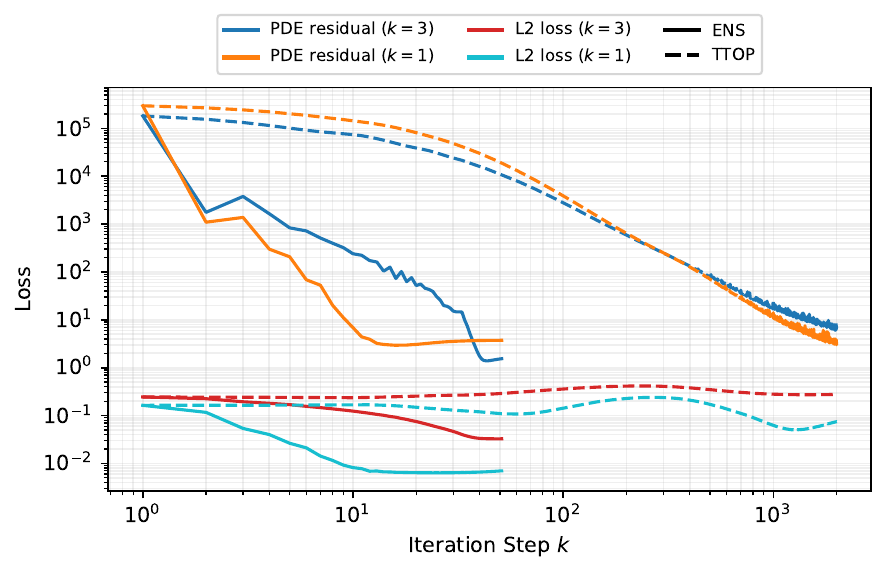}
\captionof{figure}{\textbf{PDE residual and relative $L_2$ error during test-time refinement on Helmholtz ($\kappa=1,3$).} ENS (solid) reduces both within tens of steps; PINO-TTOP (dashed) needs far more iterations and, for $\kappa=3$, its $L_2$ error rises even as its residual falls---the residual--reconstruction gap: when $\sigma_{\min}$ is small, lowering the residual may not improve the solution.}        \label{fig:jacobian}
    \end{minipage}

    \vspace{-1em}
\end{figure}

\section{Conclusion and Limitations}
We introduced Error-Conditioned Neural Solvers (ENS), a recurrent framework that treats the PDE residual field as a direct input rather than an external optimization objective, and learns a nonlinear correction policy over the residual field. Compared to existing hybrid methods, which apply first- or second-order numerical schemes to the residual, ENS attains the highest prediction accuracy in the large majority of settings across diverse PDE families. We further show that minimizing the PDE residual is an unreliable proxy for reconstruction accuracy in ill-conditioned systems, which helps explain the gap between low residuals and suboptimal reconstructions in existing hybrid methods. ENS's advantage is pronounced on distribution-shift tasks, including changed coefficients and cross-equation transfer, suggesting a path toward a general \textit{neural solver} for unseen systems. 

Our experiments are limited to relatively simple 2D systems and assume the governing equations are known at inference. Because ENS reads the residual as an input rather than minimizing it, we hypothesize that it is more forgiving of imperfect residuals from discretization error or partially known equations, which would mislead an optimization-based corrector. Extending ENS to three-dimensional problems and to settings with real, noisy observations is an exciting direction for future work.

\clearpage
{\small
\bibliographystyle{abbrvnat}
\bibliography{main}
}
\medskip


\newpage
\appendix
\clearpage

\section{Theoretical Analysis: Proofs and Extended Discussion}
\label{app:proofs}

This appendix provides a proof for the proposition in Section~4, plus extended discussion
of the relationship between ENS and classical numerical methods. Throughout, $n$ denotes the
total degrees of freedom of the discretized system ($n = N^2$ for a 2D grid of resolution
$N \times N$, with analogous bounds for 3D).

\subsection{Proof of Prop.~\ref{prop:residual_recon}: Residual-Reconstruction
Gap}
\label{app:proof_residual_recon}

The solution manifold $\{u : r(u) \approx 0\}$ changes slowly along the direction
of the smallest singular vector of $J_r(u_{\mathrm{gt}})$ — one can move far from
$u_{\mathrm{gt}}$ while the residual barely changes. The proof constructs an
explicit perturbation exploiting this direction to create a lower bound on the worst-case reconstruction error.

\begin{proof}[\textbf{Linear case}]
Let $\mathcal{F}(u) = Au - f$ with $Au_{\mathrm{gt}} = f$. Let $v$ be the right
singular vector of $A$ corresponding to $\sigma_{\min}$, with $\|v\|_2 = 1$, and
define $\tilde{u} = u_{\mathrm{gt}} + \alpha v$. Since $A = U\Sigma V^*$:
\begin{equation}
    \|Av\|_2 = \|\Sigma e_{\min}\|_2 = \sigma_{\min}
\end{equation}
using $V^*v = e_{\min}$ and $\|U\,\cdot\,\|_2 = \|\cdot\|_2$. The residual and
reconstruction error are then exact:
\begin{equation}
    \|r(\tilde{u})\|_2 = \alpha\sigma_{\min}, \qquad
    \|\tilde{u} - u_{\mathrm{gt}}\|_2 = \alpha.
\end{equation}
Setting $\alpha = \varepsilon/\sigma_{\min}$ gives $\|r(\tilde{u})\|_2 = \varepsilon$
and $\|\tilde{u} - u_{\mathrm{gt}}\|_2 = \varepsilon/\sigma_{\min}$, a lower bound on the worst-case reconstruction error. 
\end{proof}

\begin{proof}[\textbf{Nonlinear case}]
Let $\mathcal{F}$ be twice Fr\'{e}chet-differentiable near $u_{\mathrm{gt}}$ with
$r(u_{\mathrm{gt}}) = 0$ and $\|D^2\mathcal{F}\|_{\mathrm{op}} \leq M$ on
$B_\delta(u_{\mathrm{gt}})$. Let $v$ be the right singular vector of
$J_r(u_{\mathrm{gt}})$ corresponding to $\sigma_{\min}$, $\|v\|_2 = 1$, and
define $\tilde{u} = u_{\mathrm{gt}} + \alpha v$. By Taylor expansion:
\begin{equation}
    \|r(\tilde{u})\|_2 \leq \alpha\sigma_{\min} + \tfrac{1}{2}M\alpha^2.
\end{equation}
Set $\alpha = \varepsilon/(2\sigma_{\min})$ for any $\varepsilon \in
(0,\, 4\sigma_{\min}^2/M)$. Then $\tfrac{1}{2}M\alpha^2 =
\varepsilon^2 M/(8\sigma_{\min}^2) < \varepsilon/2$, giving:
\begin{equation}
    \|r(\tilde{u})\|_2 < \varepsilon \qquad\text{but}\qquad
    \|\tilde{u} - u_{\mathrm{gt}}\|_2 = \frac{\varepsilon}{2\sigma_{\min}
    (J_r(u_{\mathrm{gt}}))},
\end{equation}
a lower bound on the worst-case reconstruction error.
\end{proof}

\paragraph{Worst-case conditioning lower bound.}
Let $r:\mathbb{R}^N\to\mathbb{R}^N$ denote a discretized residual map with
$r(u_\star)=0$, and write $J_\star=J_r(u_\star)$. Assume $J_\star$ is full rank
with singular values
\[
    \sigma_{\max}(J_\star)=L,\qquad
    \sigma_{\min}(J_\star)=\mu,\qquad
    \chi := \mathrm{cond}(J_\star)=L/\mu .
\]
After rescaling the residual, take $L=1$, so that $\mu=\chi^{-1}$. For an
iterate $u=u_\star+e$ in the local linearized regime, $r(u)=J_\star e+o(\|e\|)$.
Suppose the finite-precision linear algebra induces perturbations at effective
relative scale
\[
    \varepsilon_N := c_N u_{\rm mach},
\]
where $c_N$ captures dimension-dependent accumulation of roundoff errors.

The lower bounds follow from the least observable singular direction. Let
$v_{\min},w_{\min}$ be right and left singular vectors satisfying
\[
    J_\star v_{\min}=\chi^{-1}w_{\min},
    \qquad
    J_\star^{-1}w_{\min}=\chi v_{\min}.
\]
For a Newton correction, the exact linearized step solves
\[
    J_\star p=-r(u),
\]
and hence gives $p=-e$. However, perturbing the right-hand side by
$\zeta=\eta w_{\min}$ yields
\[
    J_\star p_{\rm comp}=-r(u)-\zeta,
    \qquad
    p_{\rm comp}=-e-\eta\chi v_{\min}.
\]
Thus the post-correction reconstruction error satisfies
\[
    \|u+p_{\rm comp}-u_\star\|_2=\eta \chi.
\]
Taking $\eta=\varepsilon_N\|e\|_2$ gives the worst-case lower bound
\[
    \frac{\|u+p_{\rm comp}-u_\star\|_2}{\|e\|_2}
    \geq
    \varepsilon_N\chi .
\]
If the perturbation instead acts on the Jacobian, the same extremal two-dimensional
model
\[
    A=\operatorname{diag}(1,\chi^{-1})
\]
with perturbation in the weak coordinate gives
\[
    \frac{\|u+p_{\rm comp}-u_\star\|_2}{\|e\|_2}
    \geq
    \frac{\varepsilon_N\chi}{1-\varepsilon_N\chi},
    \qquad
    \varepsilon_N\chi<1 .
\]

For Gauss--Newton implemented through the normal equations,
\[
    J_\star^\top J_\star p=-J_\star^\top r(u),
\]
the coefficient matrix $H=J_\star^\top J_\star$ has condition number
\[
    \mathrm{cond}(H)=\mathrm{cond}(J_\star)^2.
\]
Applying the same extremal construction to
\[
    H=\operatorname{diag}(1,\chi^{-2})
\]
and perturbing the weak coordinate by relative size $\varepsilon_N$ gives
\[
    \frac{\|u+p_{\rm GN,comp}-u_\star\|_2}{\|e\|_2}
    \geq
    \frac{\varepsilon_N\chi^2}{1-\varepsilon_N\chi^2},
    \qquad
    \varepsilon_N\chi^2<1 .
\]
When $\varepsilon_N\chi^2\geq 1$, the perturbed normal-equation matrix may
become singular or nearly singular in the least observable direction, so no
nontrivial uniform reconstruction guarantee follows from the normal-equation
solve alone.

For a $128\times128$ image, $N=16384$; using double precision and the conservative
dense-solve scaling $c_N=N$ gives
\[
    \varepsilon_N \approx 16384\cdot 2.2\times 10^{-16}
    \approx 3.6\times 10^{-12},
\]
so the worst-case amplification scales as
$3.6\times10^{-12}\chi$ for Newton and
$3.6\times10^{-12}\chi^2$ for Gauss--Newton normal equations, with the latter
becoming unreliable at the scale
\[
    \chi \gtrsim (3.6\times10^{-12})^{-1/2}
    \approx 5.3\times 10^5 .
\]
This is easily encountered in practice. 

\begin{remark} When a PDE is conditioned such that $\sigma_{\min} \to 0$, this reconstruction error can be arbitrarily large. 
\end{remark}

\begin{remark}[Instantiation for Helmholtz and Navier-Stokes]
For $\mathcal{F}(u;f) = \Delta u + \kappa^2 u + \lambda u^3 - f$, the Jacobian
$J_r(u_{\mathrm{gt}}) = \Delta + \kappa^2 I + 3\lambda u_{\mathrm{gt}}^2 I$ is
self-adjoint on $L^2(\Omega)$, so its singular values coincide with the absolute
values of its eigenvalues. As $\kappa$ grows, near-resonances between $\Delta + \kappa^2 I$
and the spatially varying term $3\lambda u_{\mathrm{gt}}^2 I$ drive
$\sigma_{\min}(J_r) \to 0$ without requiring exact cancellation. The second
derivative bound is $M = 6\lambda\|u_{\mathrm{gt}}\|_\infty$. Together these
place large-$\kappa$ Helmholtz in the ill-conditioned regime where the gap
$1/(2\sigma_{\min})$ is large, consistent with the pronounced
residual-reconstruction discrepancy in Tab.~\ref{tab:nonlinear_hz}.

For Navier-Stokes, $J_r(u_{\mathrm{gt}}) = \nu J_\Delta -
J_{\mathrm{adv}}(u_{\mathrm{gt}})$. As $\nu \to 0$ the viscous term vanishes
and $\sigma_{\min}(J_r)$ is controlled by the advection operator alone, which
is known to be ill-conditioned for smooth complex
flows~\citep{trefethen1993hydrodynamic}. 

In both cases, methods minimizing $\|r(u)\|_2$ at test time navigate along these
flat directions, achieving low residual without commensurate reconstruction
accuracy. ENS, trained on reconstruction loss, optimizes the correct objective
directly.
\end{remark}

\subsection{Discussion of Remark~\ref{rem:initialization}: Why ENS is  Initialization-Robust}
\label{app:proof_robustness}

Remark~\ref{rem:initialization} is an empirical observation rather than a
formal theorem.

\medskip\noindent\textbf{Residual convergence is initialization-independent.}
To our surprise, although our ENS network $\mathcal{C}_\phi$ is trained only on the FNO-based (from $P_\theta$) initialization, our experiment (Fig.~\ref{fig:motivation_convergence}) shows convergence to the almost identical error floor from diverse initializations. We hypothesize that, instead of learning the pure statistical mapping from input fields to output fields, our error-conditioning forces $\mathcal{C}_\phi$ to focus on reducing the residuals until the discretization limit, provided in diverse patterns during training, allowing robustness over various residual patterns and magnitude. This capability of ENS leads to its exceptional extrapolation ability and contrasts with GD and GN, whose correction directions depend
on $J_r(\hat{u}^{(k)})$, which could get arbitrarily unreliable far from $u_{\mathrm{gt}}$.

\medskip\noindent\textbf{Reconstruction loss is not initialization-independent.}
All initializations converge to the same residual floor but not necessarily to
the same point on the low-residual manifold $\{u : r(u) \approx 0\}$. By
Prop.~\ref{prop:residual_recon}, points on this manifold can lie at distance
up to $\varepsilon/\sigma_{\min}$ from $u_{\mathrm{gt}}$, so better initializations
yield lower reconstruction error despite reaching the same residual floor.

\subsection{Per-Step Complexity}
\label{app:computation}
\medskip\noindent\textbf{Gradient Descent.}
Computing $\nabla_u\|r(u)\|^2 = 2J_r(u)^\top r(u)$ via reverse-mode automatic
differentiation requires one forward pass ($O(n)$ for sparse $\mathcal{F}$) and one backward
pass ($O(n)$). Total: $O(n)$, and $O(n\log n)$ for a transformer architecture.

\medskip\noindent\textbf{Gauss-Newton.}
The update $(J_r^\top J_r)\delta u = -J_r^\top r$ requires:
\begin{enumerate}
    \item Forming $J_r(u) \in \mathbb{R}^{n \times n}$: $O(n^2)$ via $n$ backward passes
    \item Forming $J_r^\top J_r \in \mathbb{R}^{n \times n}$: $O(n^3)$ naively, $O(n^2)$
    exploiting sparsity
    \item Solving the $n \times n$ linear system: $O(n^3)$ directly, $O(n^2)$ via Krylov
    methods with sparse $J_r$
\end{enumerate}
Total: $O(n^2)$ with Krylov solvers, $O(n^3)$ naively. Classical solvers (multigrid, CG)
exploit the hierarchical sparsity of $\mathcal{F}$ to achieve $O(n)$ to $O(n\log n)$ for the
same system. Gauss-Newton is therefore $O(n)$ to $O(n^2)$ times more expensive per step than
classical solvers, while providing weaker convergence guarantees.

\medskip\noindent\textbf{PCFM's Schur complement.}
PCFM reduces cost by solving only the $m \times m$ Gram matrix $JJ^\top$, costing
$O(m^3 + mn)$. For $m \ll n$ (e.g., $m=1$ for scalar mass conservation), this is $O(n)$.
For full PDE residual enforcement where $m = n$, this recovers $O(n^3)$. PCFM's claimed
$O(n)$ efficiency therefore does not extend to general PDE residuals.

\medskip\noindent\textbf{ENS.}
One residual evaluation $\mathcal{F}(\hat{u}^{(k)};f)$ costs $O(n)$ for sparse $\mathcal{F}$.
One forward pass of $\mathcal{C}_\phi$ costs $O(n)$ for a U-Net or FNO backbone and
$O(n\log n)$ for a transformer-based neural operator. Total: $O(n)$ to $O(n\log n)$ per refinement step — comparable to a single 
linear Krylov solve, and $O(n)$ to $O(n^2)$ times cheaper per step than 
Gauss-Newton for the same residual dimension, and comparable to black-box feedforward neural operators like FNO.

\section{Implementation Details}
\label{app:instantiations}
\medskip\noindent\textbf{Architecture.} The predictor $\mathcal{P}_\theta$ maps the PDE inputs $(f, g)$ to an initial estimate $\hat{u}^{(0)}$ in a single forward pass. At each step $k$, the corrector $\mathcal{C}_\phi$ takes $(f, g, \hat{u}^{(k)}, r^{(k)})$, concatenated along the channel dimension, and outputs a correction field $\delta\hat{u}^{(k)}$ at the same spatial resolution. The predictor and corrector use the same backbone, which depends on the problem family. For the static equations (Helmholtz, Darcy, Poisson) we use a Fourier Neural Operator (FNO) with four Fourier layers, 64 hidden channels, and 20 Fourier modes per spatial dimension; the standard FNO lifting and projection modules are replaced with 3-layer CNNs to better preserve local spatial structure (5-layer CNNs for Darcy). For the turbulent flows (Navier--Stokes and Kolmogorov), both networks instead use a transformer-based VideoPDE backbone~\citep{li2025videopde}, which better captures the spatio-temporal structure of these systems.

\medskip\noindent\textbf{Training.}
Both networks are trained jointly end-to-end by unrolling the full $K$-step
correction loop. At each training step: (i) $\mathcal{P}_\theta$ produces
$\hat{u}^{(0)}$, (ii) residuals $r^{(k)}$ are
recomputed at each step as described in Sec.~\ref{sec: residual_comp}, (iii) all $K$ intermediate
predictions are supervised:
\begin{equation}
    \mathcal{L}_{\mathrm{data}} = \frac{1}{K}\sum_{k=0}^{K-1}
    \frac{1}{Bn}\sum_{i=1}^{B}\left\|\hat{u}^{(k)}_i - u_{\mathrm{gt},i}\right\|_2^2,
\end{equation}
where $B$ is the batch size.
Supervising all intermediate steps rather than only the final output provides
dense gradient signal to both networks and exposes $\mathcal{C}_\phi$ to the
full distribution of residual fields it encounters at inference time, including
compounding errors from imperfect early corrections. We use $K = 5$ unrolled
steps during training. Both networks are optimised jointly with AdamW
($\mathrm{lr} = 1\times10^{-4}$, weight decay $10^{-4}$). All experiments are conducted on a single RTX 6000 Ada with a batch size $B = 32$. We set the correction step size to $\beta=0.05$ for static PDEs and Kolmogorov flow, and $\beta = 0.1$ for Navier-Stokes.

\medskip\noindent\textbf{Inference.}
At inference time, the loop runs for $K_{\mathrm{inf}}$ steps determined by
the computational budget and the amount of distribution shift, with $r^{(k)}$ recomputed from the updated solution
at every step. For out-of-distribution experiments we increase $k$ to address the slower convergence. Specifically, for the Helmholtz equation, we increase $k$ from $15$ in-distribution to $80$ for wave-number extrapolation and to $120$ for super-resolution. For Navier--Stokes, we increase $k$ from $5$ to $15$ for viscosity shift and to $10$ for forcing shift. For Kolmogorov flow, we increase $k$ from $12$ to $15$ for both viscosity and forcing shifts, and to $90$ for super-resolution. Performance continues to improve beyond $k = 5$ until the
residual floor $\varepsilon_{\mathcal{C}}$ is reached. The corrector seems robust
to imperfect initialization quality since it was trained with simulated
residuals across the full range of correction stages.

\subsection{Diffusion ENS (Extension)}

As an exploratory extension of the ENS principle to generative solvers, we
augment a base EDM model $\mathcal{D}_{\mathrm{base}}$ \citep{karras2022elucidating}, pretrained from scratch with the standard EDM denoising objective and then freeze, with a trainable error-conditioned model
$\mathcal{D}_{\mathrm{ENS}}$ of the same architecture with an added meta-conditioning input. At each denoising step,
$\mathcal{D}_{\mathrm{base}}$ produces a base estimate $\hat{u}_0^{\mathrm{base}}$,
whose residual is concatenated with the estimate to form the meta-feature:
\begin{equation}
    \mathbf{m}_t = \left[\,\mathcal{F}(\hat{u}_0^{\mathrm{base}};\, f),\;
    \hat{u}_0^{\mathrm{base}}\,\right]
\end{equation}
$\mathcal{D}_{\mathrm{ENS}}$ then produces a residual-corrected prediction:
\begin{equation}
    \hat{u}_0^{\mathrm{ENS}} = \mathcal{D}_{\mathrm{ENS}}\!\left(u_t,\,\sigma,\,
    [y,\,\mathbf{m}_t]\right)
\end{equation}
replacing the standard $\hat{x}_0$ estimate throughout the EDM sampling loop.
Here $y$ is a sparse observation of the input field, comprising $500$ random sensor
measurements of the source term $f$.
$\mathbf{m}_t$ is recomputed at each sampling step, providing dynamically
adapted residual corrections throughout the trajectory.

\medskip\noindent\textbf{Architecture and Training.}
Both $\mathcal{D}_{\mathrm{base}}$ and $\mathcal{D}_{\mathrm{ENS}}$ share the same
EDM \texttt{SongUNet} backbone (${\sim}55$M parameters), conditioned on $\sigma$ via
Fourier feature embeddings. The conditioning fields $[y, \mathbf{m}_t]$ are
concatenated as additional input channels and incorporated additively through a
conditioning encoder at each block. $\mathcal{D}_{\mathrm{base}}$ is pretrained with
the standard EDM denoising objective and then frozen; only $\mathcal{D}_{\mathrm{ENS}}$
is trained---a single forward pass through the frozen $\mathcal{D}_{\mathrm{base}}$
per step suffices to obtain $\hat{u}_0^{\mathrm{base}}$ and the resulting residual,
with no simulation loop required. We train $\mathcal{D}_{\mathrm{ENS}}$ with AdamW
($\mathrm{lr} = 2\times10^{-4}$, weight decay $10^{-4}$) for 300k steps.

\medskip\noindent\textbf{Evaluation Tasks and Protocols.}
We train and evaluate Diffusion ENS on three PDEs: Helmholtz ($
\kappa = 1$),  Darcy flow (binary permeability), and Poisson, all
discretized on $128\times128$ grids, with $50{,}000$ training and $100$ test
samples per PDE.
We report relative $L_2$ error and PDE residual MSE, computed on the predicted
field via interior second-order central differences ($h = 1/127$).

\medskip\noindent\textbf{Inference.}
Sampling uses 50 Heun ODE steps (EDM schedule, $\sigma_{\min}=0.002$,
$\sigma_{\max}=80$, $\rho=7$) with no gradient computation at inference time.
This contrasts with DiffusionPDE~\citep{huang2024diffusionpde}, which requires
2,000 steps with per-step gradient guidance, leading to highly expensive compute cost. 

\medskip\noindent\textbf{Baselines.}
We compare Diffusion ENS against representative baselines: the original DiffusionPDE model \citep{huang2024diffusionpde}, PCFM \citep{utkarsh2025physics}, and a Palette-style \citep{saharia2022paletteimagetoimagediffusionmodels} observation-conditioned EDM model.

All methods are conditioned on the same $500$ random sensor observations of the
input field and evaluated on the same $100$ test instances per PDE.
\textbf{DiffENS+GN} augments DiffENS with a single Gauss--Newton projection of the
final sample onto the PDE residual: one least-norm step $\lambda = (J_r J_r^\top +
\varepsilon I)^{-1} r(\hat{u})$, $\hat{u} \leftarrow \hat{u} - J_r^\top \lambda$ with
$\varepsilon = 10^{-6}$.
\textbf{DiffusionPDE} \citep{huang2024diffusionpde} uses the original authors'
pretrained checkpoints with a 2{,}000-step Heun sampler and per-step DPS guidance on
the observation and PDE residual.
\textbf{PCFM} \citep{utkarsh2025physics} uses a U-Net backbone trained from scratch,
sampled with 50 flow-matching steps and a per-step Gauss--Newton projection to minimize the
PDE residual.
\textbf{Palette} denotes a Palette-style observation-conditioned EDM model on the same
backbone, sampled with 50 Heun steps without residual conditioning.
Results are reported in Tab.~\ref{tab:diffusion_original}.

\begin{table*}[t]
  \caption{\textbf{Diffusion results for the sparse observation scenarios}. Instead of providing the full source term $f$, we subsample and provide partial observation, on which generative models excel \cite{huang2024diffusionpde}. In addition to our diffusion extension of ENS (DiffENS), we apply one step of Gauss Newton projection on the sampled solution (DiffENS+GN). Time indicates the mean sampling duration in seconds on an NVIDIA RTX 6000 Ada GPU.}
  \label{tab:diffusion_original}
  \centering
  \begin{small}
  \begin{tabular}{lrcccccc}
    \toprule
    & &  \multicolumn{2}{c}{Helmholtz} & \multicolumn{2}{c}{Darcy} & \multicolumn{2}{c}{Poisson} \\
    \cmidrule(lr){3-4} \cmidrule(lr){5-6} \cmidrule(lr){7-8}  
    Method  & Time (s) & $L_2$  & PDE p-in (C) & $L_2$ & PDE p-in (C) & $L_2$ & PDE p-in (C)  \\
    \midrule
DiffENS & \underline{5.8} & \textbf{0.0214} & 1.68e+02 &  \textbf{0.0298}  & \textbf{4.56e+00} & \textbf{0.0212}  & 1.57e+02   \\
DiffENS+GN  & 6.5 & \textbf{0.0214}  & \textbf{1.54e-04} & \underline{0.0312}  & \underline{5.23e+01} & \underline{0.0213}  & \textbf{1.22e-03}  \\
DiffusionPDE  & 215 & 0.1483  & 3.17e+05  & 0.0597  & 2.29e+03 & 0.0713  & 5.93e+04   \\
PCFM  & 191 & 0.1557  & \underline{3.13e-02}  & 0.0833  & 1.65e+02 & 0.1619  & \underline{5.74e-03}  \\
Palette  & \textbf{2.8} & \underline{0.0237} & 6.88e+03  & 0.0318 & 1.24e+03 & 0.0237 & 5.44e+03  \\
    \bottomrule
    \vspace{-0.5cm}
  \end{tabular}
  \end{small}
\end{table*}


\section{Experiment Details}
\subsection{Dataset Preparation}
\label{app:PDEs}
 
All static PDE datasets (Helmholtz, Poisson, Darcy) are generated by sampling input
fields from Gaussian Random Fields (GRFs) and solving the resulting sparse linear
systems using standard sparse direct solvers; we use Newton's method for non-linear Helmholtz. Navier-Stokes initial vorticity fields
are sampled from GRFs; spatial derivatives are computed in Fourier space with
dealiasing, and time integration uses a Crank-Nicolson scheme for the viscous term
with explicit treatment of the nonlinear advection. For feed-forward ENS, each static PDE and Navier-Stokes dataset contains 1,000 training samples, while Kolmogorov flow dataset contains 5,000 training samples. For Diffusion ENS, each dataset contains 50,000 training samples. All datasets contain 100 test samples. Data generation times are reported in
Tab.~\ref{tab:datagen}. The detailed data generation protocols for both training and in-distribution test data are described below, while the configurations for extrapolation test data are summarized in Tab.~\ref{tab:extrap}.
 
\begin{table*}[h]
\centering
\caption{\textbf{Data generation time per sample across PDE families}. Static PDE datasets are generated on CPU, while the time-dependent Navier-Stokes and Kolmogorov flow datasets are generated on GPU to accelerate numerical simulation.}
\label{tab:datagen}
\begin{tabular}{lcccccc}
\toprule
\textbf{Equation} &Linear Helmholtz & Poisson &Nonlinear Helmholtz& Darcy & Navier-Stokes & Kolmogorov Flow \\
\midrule
\textbf{Time (s)} & 0.04 & 0.04 & 1.00 & 0.40 & 23.30 & 15.5\\
\bottomrule
\end{tabular}
\end{table*}
 
\subsubsection{Linear/Nonlinear Helmholtz}
 
\begin{equation}
\begin{aligned}
    \Delta u(\mathbf{r}) + \kappa^2 u(\mathbf{r}) + \lambda\, u(\mathbf{r})^3
    &= f(\mathbf{r}), \quad \mathbf{r} \in \Omega \\
    u(\mathbf{r}) &= 0, \quad \mathbf{r} \in \partial\Omega
\end{aligned}
\end{equation}
The wavenumber $\kappa$ controls oscillation frequency and $\lambda$ introduces cubic
nonlinearity. For the linear case ($\lambda=0$) we train with $\kappa=1$; for the
nonlinear case we train with $\kappa=2$, $\lambda=1$. Input forcing $f$ is sampled from
a GRF with smoothness parameter $\alpha=2.0$ and length scale $\tau=3.0$ . 
 
\subsubsection{Poisson}
 
\begin{equation}
\begin{aligned}
    \Delta u(\mathbf{r}) &= a\,f(\mathbf{r}), \quad \mathbf{r} \in \Omega \\
    u(\mathbf{r}) &= 0, \quad \mathbf{r} \in \partial\Omega
\end{aligned}
\end{equation}
The scaling factor $a=2$ is used for training. Poisson and Helmholtz also serve as the mutual
\emph{cross-equation transfer target} for each other — models trained on Poisson are evaluated
on Helmholtz without fine-tuning, and vice versa, exploiting the shared Laplacian structure that a
residual-conditioned corrector can generalize across.
 
\subsubsection{Darcy Flow}
 
\begin{equation}
\begin{aligned}
    -\nabla \cdot \bigl(a(\mathbf{r})\,\nabla u(\mathbf{r})\bigr)
    &= f(\mathbf{r}), \quad \mathbf{r} \in \Omega \\
    u(\mathbf{r}) &= 0, \quad \mathbf{r} \in \partial\Omega
\end{aligned}
\end{equation}
Following the setup of FNO, the coefficient field $a(\mathbf{r})$ is generated by first sampling a GRF
\[
g \sim \mathcal{N}\!\left(0,(-\Delta+\tau^2 I)^{-\alpha}\right),
\]
and then applying a pointwise thresholding operator
\[
a(\mathbf{r})=
\begin{cases}
12, & g(\mathbf{r})\ge 0,\\
3, & g(\mathbf{r})<0.
\end{cases}
\]
resulting in a binary-valued permeability field. The forcing is kept fixed $f(\mathbf{r}) = 100$. 
 
\subsubsection{Navier-Stokes/Kolmogorov flow}
 
\begin{equation}
\begin{aligned}
    \partial_t w(\mathbf{r}, t) + v(\mathbf{r}, t)\cdot\nabla
    w(\mathbf{r}, t)
    &= \nu\,\Delta w(\mathbf{r}, t) + f(\mathbf{r}),
    \quad \mathbf{r} \in \Omega,\ t \in (t_0, T] \\
    \nabla \cdot v(\mathbf{r}, t) &= 0
\end{aligned}
\end{equation}
where $w = \nabla\times v$ is the vorticity and $\nu=10^{-4}$ is the viscosity.
During training we use sinusoidal forcing:
\begin{equation}
    f(\mathbf{r}) = 0.1\bigl(\sin(2\pi(r_1+r_2)) + \cos(2\pi(r_1+r_2))\bigr)
\end{equation}
We consider the time interval $t\in[2,6]$, where the initial vorticity $w_0$ at
$t=2$ is used to predict the next 20 frames $\{w_1,\ldots,w_{20}\}$ with a temporal resolution of
$\Delta t=0.2$. We also study the Kolmogorov flow setting, where the forcing term
is given by $f(\mathbf{r}) = -4\cos(8\pi r_2)$, the viscosity is set to $\nu = 5 \times 10^{-4}$, and solutions are collected over the interval  $t\in[2,4]$ with temporal resolution $\Delta t = 0.05$. Note that these time steps correspond to the dataset sampling intervals. The numerical simulations are performed using a much finer time step of $\Delta t=10^{-4}$.
 
\subsection{PDE Residual Computation}
\label{sec: residual_comp}
Unless otherwise specified, all PDE residuals reported in the tables are computed as the mean squared error (MSE) of the residual field.
\paragraph{Helmholtz, Poisson, Darcy.} Second-order central finite differences on a
uniform $n\times n$ grid with spacing $h=1/(n-1)$, differentiable with respect to $u$.
For nonlinear Helmholtz at grid point $(i,j)$:
\begin{equation}
    \begin{aligned}
        r_{ij} &= \frac{u_{i+1,j}-2u_{i,j}+u_{i-1,j}}{h^2}
                + \frac{u_{i,j+1}-2u_{i,j}+u_{i,j-1}}{h^2} \\
               &\quad + \kappa^2 u_{ij} + \lambda u_{ij}^3 - f_{ij}
    \end{aligned}
\end{equation}

\paragraph{Navier-Stokes and Kolmogorov Flow.}Given a predicted vorticity trajectory $w \in \mathbb{R}^{T \times N \times N}$, we compute the PDE residual using spectral differentiation in space and finite differences in time. Specifically, the temporal derivative $\partial_t w$ is approximated by forward and backward differences at the first and last frames, respectively, and by centered differences for all intermediate frames. Spatial derivatives are computed in Fourier space.  We recover the stream function by
\begin{equation}
   \begin{aligned}
   \hat{\psi}(\mathbf{k}) =
   \frac{\hat{w}(\mathbf{k})}{4\pi^2|\mathbf{k}|^2}
   \end{aligned}
\end{equation}
The velocity is then obtained as
\begin{equation}
   \begin{aligned}
   u = \partial_y \psi, \qquad v = -\partial_x \psi
   \end{aligned}
\end{equation}
and the vorticity derivatives $\partial_x w$, $\partial_y w$, and $\Delta w$ are also computed spectrally. The residual is defined as
\begin{equation}
   \begin{aligned}
   r = \partial_t w + u \partial_x w+v \partial_y w - \nu \Delta w -f
   \end{aligned}
\end{equation}

\subsection{Extrapolation Protocols}
 
All evaluations are zero-shot — no fine-tuning on target distributions. We summarize the extrapolation settings in Tab.~\ref{tab:extrap}.

\subsection{Baseline Details}
For all baselines, we follow their official implementations whenever possible. For static PDEs, FNO and PINO employ a 4-layer FNO-2D backbone with 64 hidden channels and 20 Fourier modes, while for dynamic PDEs they use a 4-layer FNO-3D backbone with 64 hidden channels and 8 Fourier modes. Following the official configurations, FNO is trained using only the reconstruction loss (relative $H^1$ loss). PINO is trained with reconstruction loss and PDE residual for static PDEs, and additionally incorporates a first-frame regularization term for dynamic PDEs. PINO-TTOP initializes from the pretrained PINO model and performs test-time optimization using only the PDE residual for static PDEs, and the PDE residual together with first-frame regularization for dynamic PDEs, all following their official implementation. We use 500 optimization steps for static PDEs and 1,000 steps for dynamic PDEs as in codebase. For each extrapolation setting , PINO-TTOP is re-run from the pretrained PINO model and optimized independently. For POSEIDON, we initialize from the pretrained POSEIDON-B model and fine-tune it to be converged for 200 epochs on each downstream task.

Our DiffusionPDE baselines for time-dependent PDEs (NS, KF) uses a two-frame representation (modeling only initial and final states as a joint distribution) in contrast to the full spatiotemporal fields used by PCFM and ENS. We reuse author-provided checkpoints for Darcy, Poisson, and linear Helmholtz; we train custom DiffusionPDE models for other PDEs, fine-tuning the NS model from the author's non-bounded NS weights. Following the original implementation, we use a Heun EDM sampler with 2,000 steps and DPS-style soft guidance on the PDE residual and input observation at each denoising step.

Our PCFM baseline uses the original Functional Flow Matching (FFM) framework with an FNO backbone (including 3D FNO for time-dependent PDEs). We train all PCFM models from scratch following the original training hyperparameters. During inference (50 ODE steps), PDE constraints are enforced via a hard projection at every integration step using a Gauss-Newton solver; we use the authors' default sampler without guided interpolation. For time-dependent PDEs (NS, KF), the PCFM applies IC and mass conservation constraints, following the original authors' implementation.

\begin{table*}[t]
\centering
\caption{Training vs.\ evaluation parameters for each extrapolation regime.}
\label{tab:extrap}
\begin{tabular}{llll}
\toprule
\textbf{PDE} & \textbf{Regime} & \textbf{Training} & \textbf{Evaluation} \\
\midrule
linear Helmholtz     & Extrapolation      & $\kappa =1$    & $\kappa=3$ \\
linear Helmholtz     & Super-resolution   & $128\times128$      & $256\times256$ \\
linear Helmholtz     & Cross-equation   & Helmholtz   &  Poisson\\
\midrule
nonlinear Helmholtz     & Extrapolation      & $\kappa=2, \lambda=1$    & $\kappa=3, \lambda=4$ \\
nonlinear Helmholtz     & Super-resolution   & $128\times128$      & $256\times256$ \\
\midrule
Poisson       & Scaling shift      & $a=2$               & $a=1$ \\
Poisson       & Super-resolution   & $128\times128$      & $256\times256$ \\
Poisson       & Cross-equation   & Poisson    & nonlinear Helmholtz\\
\midrule
Navier-Stokes & Viscosity shift    & $\nu=10^{-4}$       & $\nu=10^{-5}$ \\
Navier-Stokes & Forcing shift      & $2\pi$ freq.        & $4\pi$ freq. \\
Navier-Stokes & Super-resolution   & $128\times128$      & $256\times256$ \\
\midrule
Darcy flow & Forcing shift & $f=100$ & $f=200$ \\
Darcy flow       & Super-resolution   & $128\times128$      & $256\times256$ \\
 \midrule
 Kolmogorov flow & Forcing shift &$8\pi$ freq.& $10\pi$ freq. \\
Kolmogorov flow & Viscosity shift & $\nu = 5 \times 10^{-4}$ & $\nu = 10^{-3}$ \\
Kolmogorov flow& Super-resolution   & $128\times128$      & $256\times256$ \\
\bottomrule
\end{tabular}
\end{table*}

\section{Additional Ablation Analysis}
\label{app: addition}

We expand on the ablations summarized in the main text, covering (i) residual conditioning versus computational depth, (ii) matched-backbone comparisons that isolate the method from the architecture, (iii) backbone expressivity, (iv) the dynamics of the learned correction, and (v) residual versus gradient conditioning.

\paragraph{Residual conditioning versus computational depth.} To confirm that ENS's improvement comes from reading the residual rather than from additional iterations, we replace the residual input with a zero field while holding the architecture and step count fixed. Figs.~\ref{fig: RLvis} and~\ref{fig: PDEvis} show the reconstruction error and PDE residual over correction steps: the ablated model reduces neither quantity, whereas ENS drives both down, so the effect comes from residual conditioning rather than the recurrent computation itself. The same figures show that out-of-distribution inputs reach the same floors as in-distribution inputs but require more correction steps, which the residual-plateau stopping rule accommodates automatically.

\paragraph{The architecture is not the source.} ENS uses a CNN-augmented FNO backbone for the static PDEs and a transformer-based VideoPDE backbone for the turbulent flows. To rule out the backbone as the source of the gains, we place the strongest baselines---the plain operator, its physics-informed (PINO) variant, and its test-time-optimized (TTOP) variant---on ENS's own backbone for each family (Tabs.~\ref{tab:HZ_AB},~\ref{tab:NS_AB},~\ref{tab:KF_AB}). Across Helmholtz, Navier--Stokes, and Kolmogorov flow, the baselines methods do not show clear improvement against their original FNO architectures with mixed signals. ENS retains the lowest reconstruction error in every regime against the baselines that share its architecture. The TTOP variant reproduces the residual--reconstruction gap of Prop.~\ref{prop:residual_recon} on the matched backbone: it reaches comparable or lower PDE residuals while leaving reconstruction error high, and the gap widens with ill-conditioning---mild on Navier--Stokes, where VideoPDE(TTOP) attains a competitive $L_2$ of $8.69\mathrm{e}{-3}$, and severe on Kolmogorov flow, where VideoPDE(TTOP) attains the lowest residual yet an $L_2$ of $4.34\mathrm{e}{-1}$, roughly $50\times$ worse than ENS (in-distribution). The ENS's advantage is therefore attributable to residual conditioning, not the backbone.

\paragraph{Backbone expressivity.} Residual conditioning nonetheless requires a backbone able to represent the high-frequency residual field. Tab~\ref{tab:Ablation} sweeps backbones on the linear Helmholtz equation: a pure FNO diverges, its CNN-augmented variants converge (CNN-FNO-CNN best, and in the fewest steps), and a U-Net converges but to a substantially worse solution. Spectral truncation prevents a pure FNO from representing the residual it is asked to read, while the local feature extraction added by CNN lifting and projection restores this capacity. ENS thus works with any sufficiently expressive backbone rather than a single specific one, including the transformer-based architecture applied to the turbulent flows.

\paragraph{Dynamics of the learned correction.} Figs.~\ref{fig: delta_u} and~\ref{fig: delta_u_noisy} visualize the reconstruction loss, PDE residual, and the learned correction $\delta\hat{u}^{(k)}$ across iterations for extrapolated and Gaussian-noised source fields. The correction magnitude is largest in early iterations and decays as the prediction approaches the solution manifold; under distribution shift, convergence requires roughly $60$--$90$ iterations, well beyond the in-distribution case. That ENS recovers accurate solutions even from noised inputs---initializations far from any seen in training---indicates that the corrector learns a robust correction policy rather than a fixed mapping from the training distribution.

\paragraph{Residual versus gradient conditioning.} Finally, we ask whether the corrector benefits from the residual field itself or from its physics-loss gradient $\nabla_{u_k}\mathcal{L}_{\mathrm{res}}$, with $\mathcal{L}_{\mathrm{res}}=\tfrac12\|\mathcal{R}(u_k)\|_2^2$. Fig.~\ref{fig: gradient} compares the two: gradient conditioning accelerates residual minimization but stalls the reconstruction loss, whereas residual conditioning reduces both together. Conditioning on the gradient biases the corrector toward driving the residual to zero---the very objective Prop.~\ref{prop:residual_recon} shows to be an unreliable proxy---confirming that the residual is most useful as an input signal, not as an optimization target.

\section{Related Work (Continued)}
\label{sec: rel_appendix}
 
\paragraph{Classical numerical solvers.}
The dominant classical approaches to solving PDEs — finite element methods, finite difference methods, and spectral methods — rely on discretizing the governing equations into large linear or nonlinear systems \citep{aliabadi2020boundary, solin2005partial}. Although these methods come with strong convergence guarantees, their cost scales unfavorably with resolution and dimensionality, and iterative solvers such as multigrid and Krylov methods \citep{saad2003iterative} still require significant per-instance computation. ENS is motivated precisely by this bottleneck: it amortizes the correction process across a family of PDE instances, replacing expensive per-instance numerical optimization with a single learned recurrent policy.

\paragraph{Feed-forward Neural Operators}
We discuss related work on feed-forward and neural-numerical hybrid methods. In Appendix, we cover more comprehensive related work, including generative models and unrolled networks.

Neural operators learn mappings between function spaces, enabling fast prediction across entire PDE families. FNO \citep{li2020fourier} parameterizes the integral kernel in Fourier space; DeepONet \citep{lu2021learning} provides universal approximation guarantees via a branch-trunk decomposition; and a unified theoretical treatment is given in \citet{kovachki2023neural}. Foundation models such as POSEIDON \citep{herde2024poseidon} extend operator learning to large-scale pretraining over diverse PDE families. All of these methods are \emph{physics-blind} at inference time: the network receives no feedback about its own prediction error, which can lead to large constraint violations even when outputs appear visually plausible. While PINNs \citep{raissi2019physics} instead parameterize the solution of a single PDE instance, it requires expensive retraining per instance (Fig.~\ref{fig:motivation_convergence}).
 
\paragraph{External residual correction.} To improve the physical accuracy of feed-forward models, a growing body of work incorporates the PDE residual as a correction signal at inference time, applied externally to learned network. Optimization-based methods include PINO \citep{li2024physics}, which performs instance-wise fine-tuning of network parameters by minimizing $\|r(\hat{u})\|^2$; DiffusionPDE \citep{huang2024diffusionpde} and CoCoGen \citep{jacobsen2025cocogen}, which steer the reverse diffusion process with residual gradients; PDM \citep{christopher2024constrained}, which projects each diffusion step onto the feasible constraint set; PCFM \citep{utkarsh2025physics}, which applies penalized Gauss-Newton corrections at each flow-matching step; and ECI sampling \citep{cheng2025gradient}, which adapts pre-trained flow-matching models to satisfy hard constraints in zero-shot manner. Analytical approaches instead construct a closed-form correction from the residual directly: \citet{cao2023residual} and \citet{jha2024residual} solve a linear variational problem at the neural operator's prediction, yielding a one-shot Newton-like correction; PhysicsCorrect \citep{huang2026physicscorrect} similarly applies a linearized least-squares solve using a precomputed Jacobian pseudoinverse. Despite their differences, none of these methods feeds the residual field as a direct input channel to a learned corrector network.

\paragraph{Learned iterative solvers and algorithm unrolling.}
Iterative correction has also been explored without explicit residual feedback. PDE-Refiner \citep{lippe2023pde} applies diffusion-inspired multi-step refinement using noise as the correction signal across refinement levels, enabling accurate modeling of all frequency components in long-horizon rollouts. INC \citep{wei2026inc} embeds learned corrections within the governing equations of a coarse numerical solver, provably reducing autoregressive error amplification in chaotic regimes. Both demonstrate the value of iterative learned correction, but neither uses the PDE residual field as an input conditioning.

Learning to optimize \citep{andrychowicz2016learning} and Deep Equilibrium Models \citep{bai2019deep} each demonstrate that iterative correction procedures can be learned rather than hand-designed. ENS shares this iterative spirit but differs in a key respect: the corrector is explicitly conditioned on the \emph{spatial structure} of the PDE residual field, giving the network direct access to where and how its prediction violates the governing equations rather than learning a general-purpose update rule.

\paragraph{Diffusion and generative models for scientific problems.}
Diffusion models \citep{ho2020denoising, song2020score, karras2022elucidating} have driven a wave of applications across the natural sciences: RFdiffusion \citep{watson2023novo} and DiffSBDD \citep{schneuing2024structure} enable de~novo protein and drug design; \citep{hoogeboom2022equivariant} generate 3D molecular conformations with equivariant architectures; CDVAE \citep{xie2021crystal} generates stable crystal structures via score-matching; and GenCast \citep{price2023gencast} produces probabilistic weather forecasts that surpass operational numerical prediction systems. DiffusionPDE \citep{huang2024diffusionpde} applies generative priors to PDE solving under partial observation, using physical residuals as external guidance, the approach our Diffusion ENS extension replaces with a learned, residual-conditioned corrector internal to the denoising loop.

\section{Additional Results}
\label{app: addition}
We report remaining quantitative results that we omitted in the main paper. These include results for non-linear Helmholtz in Tab.~\ref{tab:nonlinear_hz}, Darcy Flow in Tab.~\ref{tab:Darcy}, and Poisson equation in Tab.~\ref{tab:forward_poisson}.   

\begin{table*}[t]
  \caption{\textbf{Forward prediction results for the Poisson problem}. Bold indicates the best result, and underlining indicates the second-best. }
  \label{tab:forward_poisson}
  \centering
  \small 
  \begin{tabular}{lcccccccc}
    \toprule
    & \multicolumn{2}{c}{In-distribution} & \multicolumn{2}{c}{Super-resolution} & 
    \multicolumn{2}{c}{Extrapolation} & 
    \multicolumn{2}{c}{Cross-equation}
    \\
    \cmidrule(r){2-3} \cmidrule(r){4-5} \cmidrule(r){6-7} \cmidrule(r){8-9}
    Method & $L_2$ Rel & PDE Res & $L_2$ Rel & PDE Res & $L_2$ Rel & PDE Res& $L_2$ Rel & PDE Res\\
    \midrule
ENS(Ours) & \textbf{3.63e-03} & \textbf{3.61e+00} & \textbf{1.77e-02} & \textbf{1.93e+02} & \textbf{3.57e-02} & \textbf{4.40e+00} & 
\textbf{4.19e-02}  & \textbf{9.40e-01} \\

FNO & \underline{1.79e-02} & 6.98e+02 & \underline{2.00e-02}& 1.51e+04 & 1.00e+00 & 1.59e+03 & 5.18e-01 &  3.70e+03\\

PINO & 7.01e-02 & 5.04e+02 & 6.52e-02 & 1.24e+04 &  9.99e-01 & 1.13e+03 &  5.32e-01  & 1.39e+03  \\

PINO(TTOP) & 1.37e-01 & 8.63e+01 & 2.15e-01 & \underline{4.63e+02} & \underline{3.58e-01} & \underline{2.41e+01} &  \underline{4.64e-01} & \underline{5.27e+01}\\

POSEIDON & 3.28e-02 & 8.90e+04 & 3.75e-02 & 2.32e+05 & 9.94e-01 & 9.55e+04 & 4.80e-01 & 3.27e+04  \\
DiffusionPDE &6.24e-02 & 5.61e+03& 7.83e-01& 4.15e+04& 9.95e-01&6.92e+03 & 5.17e-01  & 1.76e+04 \\
PCFM & 8.20e-01 & \underline{2.15e+01} & OOM & OOM & 2.65e+00 & 5.42e+03 & 1.48e+00 & 2.01e+02  \\

    \bottomrule
  \end{tabular}
\end{table*}

\begin{table*}[t]
\centering
\caption{\textbf{Forward prediction results for the Darcy flow problem}. Bold indicates the best result, and underlining indicates the second-best. The PCFM projection for the Darcy constraint was numerically unstable and caused the intermediate latent to increase geometrically and overflow, indicated by "diverged".}
\label{tab:Darcy}
\small
\resizebox{0.8\linewidth}{!}{
\begin{tabular}{lccccccc}
    \toprule
    & \multicolumn{2}{c}{In-distribution} &
    \multicolumn{2}{c}{Extrapolation} & 
    \multicolumn{2}{c}{Super-resolution}
    \\ 
    \cmidrule(r){2-3} \cmidrule(r){4-5} \cmidrule(r){6-7}
    Method & $L_2$ Rel & PDE Res &  $L_2$ Rel & PDE Res &  $L_2$ Rel & PDE Res \\
    \midrule
ENS(Ours) &  \textbf{1.08e-03}& 3.72e+03 &\textbf{8.13e-03} & 1.15e+04& \textbf{7.01e-03} & 3.84e+04 \\
FNO &\underline{9.45e-03}& 1.42e+05& 5.01e-01 & 1.53e+05 & \underline{1.29e-02} &1.28e+06 \\
PINO &  7.06e-02 & 2.95e+04 & 5.19e-01 &4.24e+04 & 6.79e-02 &2.37e+05 \\
PINO(TTOP) &2.35e-01 &\textbf{1.51e+03} & \underline{3.87e-01}& \underline{5.80e+03} &5.74e-01 & \textbf{6.24e+03}\\
POSEIDON & 1.94e-02 & 2.54e+06&4.96e-01 & 2.63e+06& 1.83e-02  &6.44e+06\\
    DiffusionPDE & 8.28e-01 & \underline{1.96e+03} & 9.00e-01 & \textbf{1.82e+03} & 5.40e-01 & \underline{7.48e+03} \\
    PCFM & diverged & diverged & diverged & diverged & OOM & OOM \\
    \bottomrule
\end{tabular}
}
\end{table*}

\begin{table*}[t]
  \caption{\textbf{Forward prediction results for the nonlinear Helmholtz problem}. Bold indicates the best result, and underlining indicates the second-best.}
  \label{tab:nonlinear_hz}
  \centering
  \small 
  \begin{tabular}{lcccccc}
    \toprule
    & \multicolumn{2}{c}{In-distribution} & \multicolumn{2}{c}{Extrapolation} & 
    \multicolumn{2}{c}{Super-resolution}
    \\
    \cmidrule(r){2-3} \cmidrule(r){4-5} \cmidrule(r){6-7} 
    Method & $L_2$ Rel & PDE Res &  $L_2$ Rel & PDE Res& $L_2$ Rel & PDE Res\\
    \midrule
ENS(Ours) & \textbf{5.06e-03} & \underline{6.42e+00}&\textbf{6.31e-02}& 1.76e+01 & \textbf{1.09e-02}& \textbf{3.55e+01}\\
FNO &\underline{2.48e-02} & 3.39e+02 & 2.47e-01 & 2.59e+02 & \underline{1.33e-01} & 1.25e+04  \\
PINO &  4.51e-02 & 6.14e+01 & 2.46e-01 & 5.66e+01 & 5.94e-02 & 3.41e+03\\

PINO(TTOP) & 7.80e-02 & 1.45e+01 &   1.98e-01 & \underline{1.29e+01} &  1.83e-01 & \underline{1.22e+02} \\

POSEIDON &7.02e-02  &4.57e+04 & 2.77e-01&3.48e+04 & 4.16e-01 & 9.72e+05\\

DiffusionPDE &1.36e-01 &2.60e+05 & 4.64e-01& 5.27e+01 & 7.34e-01 & 6.04e+05 \\

PCFM & 7.89e-02& \textbf{2.53e+00} &\underline{1.75e-01} & \textbf{5.53e+00}&OOM &OOM \\

    \bottomrule
  \end{tabular}
\end{table*}

\begin{table}[t]
  \caption{\textbf{Comparative analysis of ENS across various architectural backbones for the linear Helmholtz equation (In-Distribution)}. Bold indicates the best result, and underlining indicates the second-best.}
  \label{tab:Ablation}
  \centering
  \small
  \setlength{\tabcolsep}{4pt}

  \begin{tabular}{lcccc}
    \toprule

    & \multicolumn{4}{c}{Helmholtz} \\

    \cmidrule(r){2-5}

    Model 
    & $L_2$ Rel  & PDE Res  & $k$ & Convergence \\
    
    \midrule

    CNN-FNO-CNN 
    & \textbf{3.04e-03} & \textbf{3.90e-01} & \textbf{15} & \ding{51} \\

    FNO 
    & 3.16e-02 & 9.03e+03 & - & \ding{55} \\

    CNN-FNO 
    & \underline{4.66e-03} & 5.19e+01 & \underline{20} & \ding{51} \\

    FNO-CNN
    & 1.44e-02 & \underline{9.19e+00} & 45 & \ding{51} \\

    U-net
    & 1.17e-01 & 7.20e+03 & \underline{20} & \ding{51} \\


    \bottomrule
  \end{tabular}
\end{table}

\begin{table*}[htb]
  \caption{\textbf{Ablation study on architecture for Helmholtz}. Bold indicates the best result, and underlining indicates the second-best.}
  \label{tab:HZ_AB}
  \centering
  \scriptsize
  \setlength{\tabcolsep}{3pt}
  \resizebox{\linewidth}{!}{%
  \begin{tabular}{lccccccccc}
    \toprule
    & \multicolumn{2}{c}{In-distribution} & \multicolumn{2}{c}{Wavenumber-shift} & 
    \multicolumn{2}{c}{Cross-equation} & 
    \multicolumn{2}{c}{Super-resolution}
    \\
    \cmidrule(r){2-3} \cmidrule(r){4-5} \cmidrule(r){6-7} \cmidrule(r){8-9}
    Method  & $L_2$ Rel & PDE Res & $L_2$ Rel & PDE Res & $L_2$ Rel & PDE Res& $L_2$ Rel & PDE Res\\
    \midrule
ENS(Ours) & \textbf{3.04e-03} & \underline{3.90e-01} &  \textbf{2.59e-02} & \underline{5.19e-01} & \textbf{8.73e-02} & \underline{6.54e+01} &
\textbf{1.82e-02} & \textbf{4.52e-01} \\

CNN-FNO-CNN & \underline{1.80e-02} & 5.29e+02  & 2.37e-01 & 5.26e+02 &\underline{5.08e-01} & 2.51e+04& \underline{1.37e-01} & 3.54e+04\\

CNN-FNO-CNN(PINO) & 2.33e-02 & 3.92e-01  & 2.35e-01 & 1.22e+01 &9.21e-01 & 9.48e+02 & 1.87e-01 & 3.69e+03 \\

CNN-FNO-CNN(TTOP) & 2.31e-02& \textbf{6.48e-02}& \underline{3.59e-02}& \textbf{9.55e-02}&6.07e-01 & \textbf{1.34e+00}&1.93e-01 &\underline{1.60e+00} \\
    \bottomrule
  \end{tabular}
  }
\end{table*}

\begin{table*}[htb]
  \caption{\textbf{Ablation study on architecture for Navier-Stokes}. Bold indicates the best result, and underlining indicates the second-best.}
  \label{tab:NS_AB}
  \centering
  \scriptsize
  \setlength{\tabcolsep}{3pt}
  \resizebox{\linewidth}{!}{%
  \begin{tabular}{lccccccccc}
    \toprule
    & \multicolumn{2}{c}{In-distribution} & \multicolumn{2}{c}{Viscosity-shift} & 
    \multicolumn{2}{c}{Forcing-shift} & 
    \multicolumn{2}{c}{Super-resolution}
    \\
    \cmidrule(r){2-3} \cmidrule(r){4-5} \cmidrule(r){6-7} \cmidrule(r){8-9}
    Method  & $L_2$ Rel & PDE Res & $L_2$ Rel & PDE Res & $L_2$ Rel & PDE Res& $L_2$ Rel & PDE Res\\
    \midrule
ENS(Ours) &\textbf{2.22e-03} &  \underline{4.69e-05} &  \textbf{3.05e-02} & \underline{1.12e-02} &  \textbf{5.31e-03} & \underline{7.40e-04} & \textbf{3.98e-02}  & \underline{9.11e-02}\\

VideoPDE &3.02e-02&2.38e-03&  1.22e-01 &2.42e-02 & 5.30e-01 & 4.53e-02 & 3.86e-01 & 7.96e-01\\

VideoPDE(PINO) &2.99e-02&3.91e-04& 1.31e-01  & 2.56e-03& 5.24e-01 & 2.02e-02 & 4.09e-01 & 9.91e-01\\

VideoPDE(TTOP) & \underline{8.69e-03}& \textbf{1.47e-05}& \underline{7.09e-02} & \textbf{4.97e-04} & \underline{2.42e-02}&\textbf{1.56e-04}  &\underline{7.81e-02} & \textbf{9.60e-04} \\
    \bottomrule
  \end{tabular}
  }
\end{table*}

\begin{table*}[htb]
  \caption{\textbf{Ablation study on architecture for Kolmogorov flow}. Bold indicates the best result, and underlining indicates the second-best.}
  \label{tab:KF_AB}
  \centering
  \scriptsize
  \setlength{\tabcolsep}{3pt}
  \resizebox{\linewidth}{!}{%
  \begin{tabular}{lccccccccc}
    \toprule
    & \multicolumn{2}{c}{In-distribution} & \multicolumn{2}{c}{Viscosity-shift} & 
    \multicolumn{2}{c}{Forcing-shift} & 
    \multicolumn{2}{c}{Super-resolution}
    \\
    \cmidrule(r){2-3} \cmidrule(r){4-5} \cmidrule(r){6-7} \cmidrule(r){8-9}
    Method  & $L_2$ Rel & PDE Res & $L_2$ Rel & PDE Res & $L_2$ Rel & PDE Res& $L_2$ Rel & PDE Res\\
    \midrule
ENS(Ours) &\textbf{8.08e-03}  & \underline{2.50e-01}& \textbf{3.42e-02}& \underline{8.05e-01} & \textbf{2.06e-02} &  \underline{3.80e-01} & \textbf{1.98e-02} &\underline{5.55e+00} \\

VideoPDE &4.12e-02&3.42e+00&  5.41e-01 &2.35e+01 & 5.04e-01 & 3.40e+01 & 5.56e-01 & 2.57e+03\\

VideoPDE(PINO) &\underline{4.08e-02}&7.73e-01&  5.33e-01 &8.36e+00 & 4.81e-01 & 1.53e+01 & \underline{5.36e-01} & 2.36e+03\\

VideoPDE(TTOP) & 4.34e-01& \textbf{1.09e-01}&  \underline{2.54e-01} &\textbf{6.54e-02} & \underline{2.14e-01} & \textbf{2.77e-02} & 7.15e-01& \textbf{1.42e+00}\\
    \bottomrule
  \end{tabular}
  }
\end{table*}

\begin{figure*}[h]
    \centering
    \includegraphics[width=1.0\linewidth]{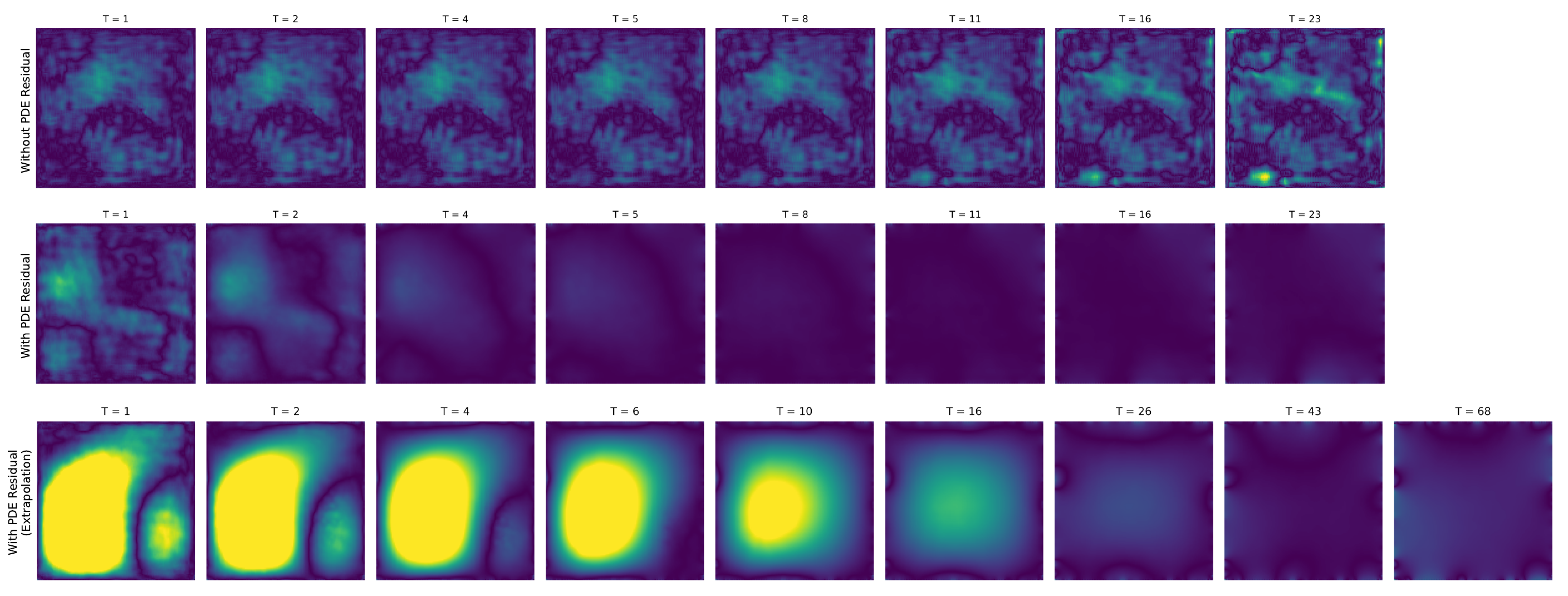}
    \caption{\textbf{Reconstruction loss ($L_1$ difference) over iterations}. From top to bottom, we visualize the $L_1$ error trajectory for the ablation baseline without PDE residual conditioning, ENS model on in-distribution data, ENS model on out-of-distribution data. Clearly, the effectiveness of ENS does not come from the increased computation only, evidenced by the failure of the ablation baseline. Note that out-of-distribution cases take more iterations to converge.}
    \label{fig: RLvis}
\end{figure*}

\begin{figure*}[h]
    \centering
    \includegraphics[width=1.0\linewidth]{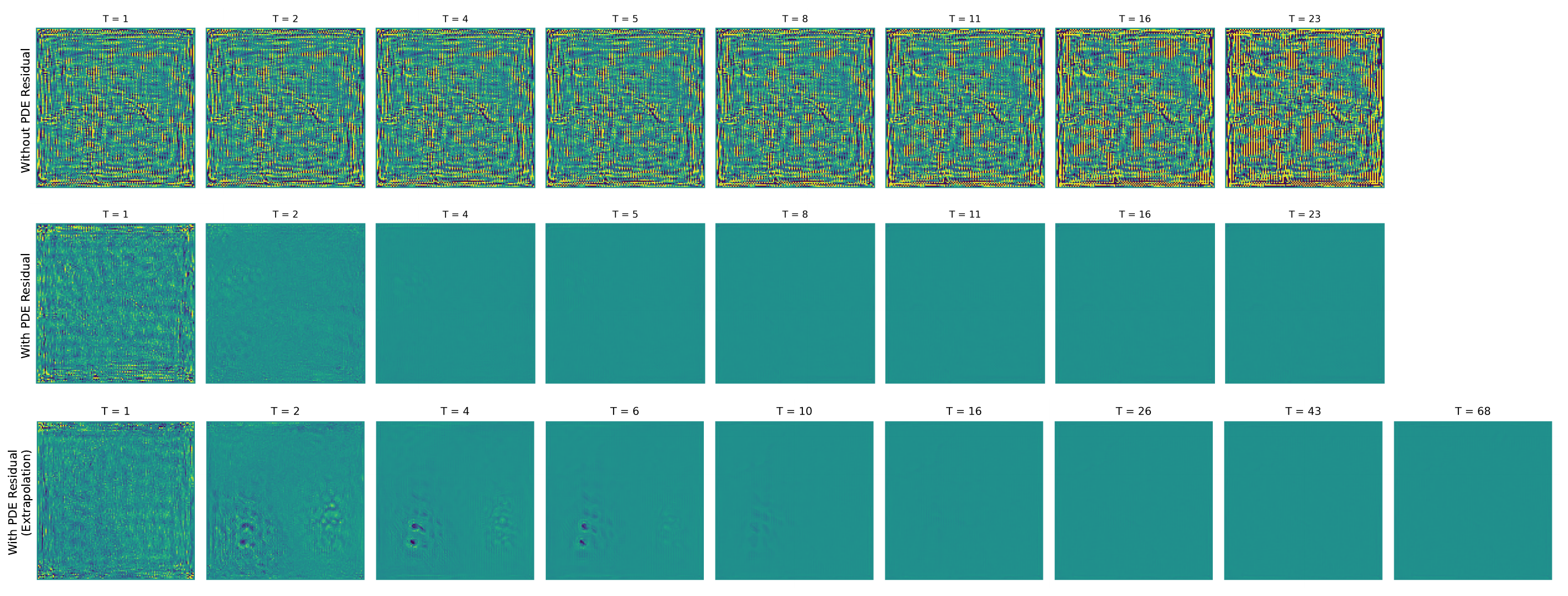}
    \caption{\textbf{PDE residual over iterations}. From top to bottom, we visualize the PDE residual trajectory for the ablation baseline without PDE residual conditioning, ENS model on in-distribution data, ENS model on out-of-distribution data. Clearly, without conditioning on the residual field, the PDE residual even increases during the correction process.  }
    \label{fig: PDEvis}
\end{figure*}

\begin{figure*}[h]
    \centering
    \includegraphics[width=1.0\linewidth]{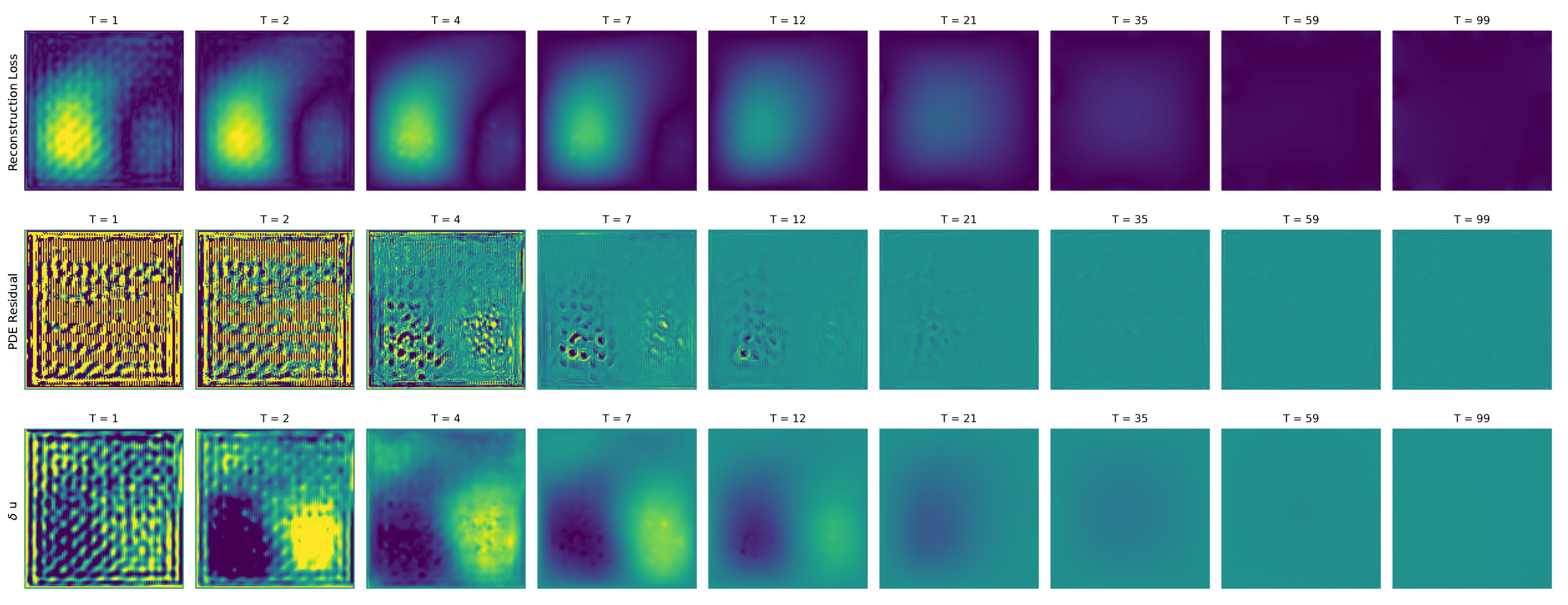}
    \caption{\textbf{Visualization of the Reconstruction loss, PDE residual, and the corresponding learned correction term $\delta\hat{u}^{(k)}$ for extrapolated source input fields}. The learned correction closely aligns with the PDE residual and progressively removes high-residual regions, leading to a monotonic reduction in both errors. Due to distribution shift, it takes  60\textasciitilde 90 iterations for ENS to converge.}
    \label{fig: delta_u}
\end{figure*}

\begin{figure*}[h]
    \centering
    \includegraphics[width=1.0\linewidth]{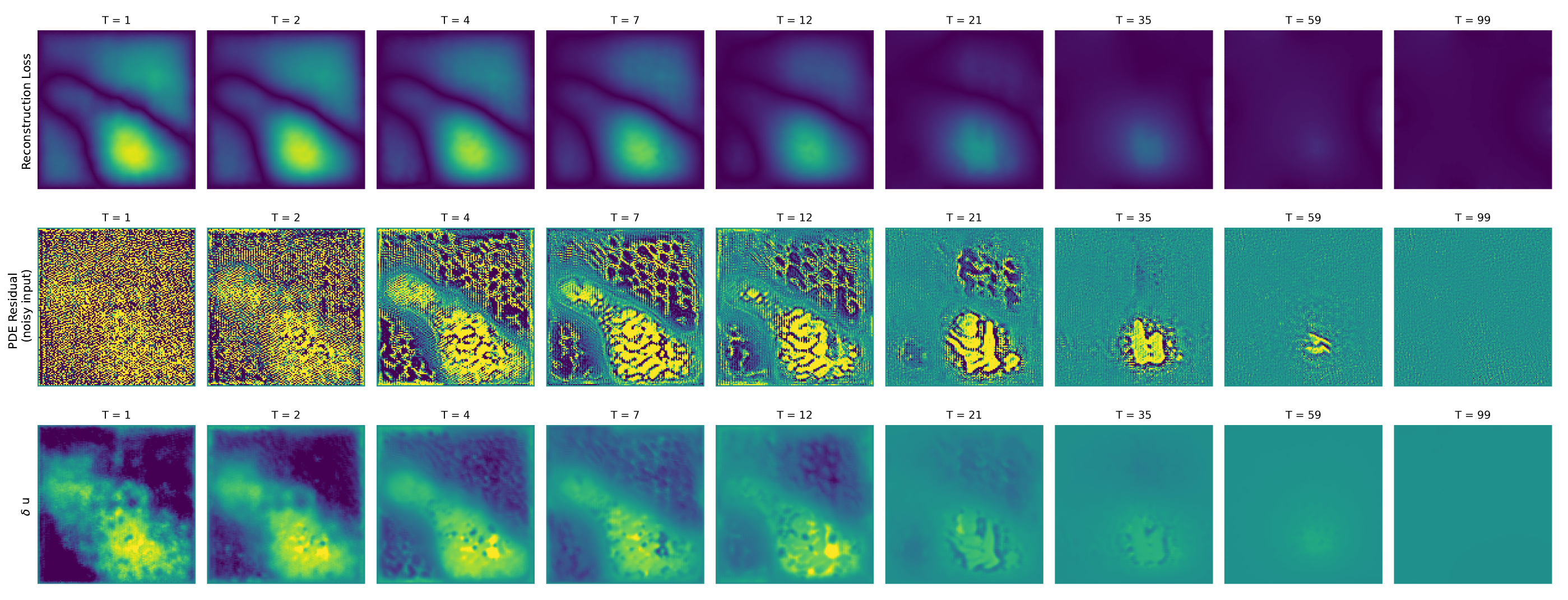}
    \caption{\textbf{Visualization of the Reconstruction loss, PDE residual, and the corresponding learned correction term $\delta\hat{u}^{(k)}$ for out-of-distribution, Gaussian-noised source input fields}. Despite the corrupted input field, the learned correction exhibits strong spatial alignment with the PDE residual, resulting in a monotonic decrease in both the PDE residual and reconstruction error.}
    \label{fig: delta_u_noisy}
\end{figure*}

\begin{figure*}[h!]
    \centering
    \includegraphics[width=0.99\linewidth]{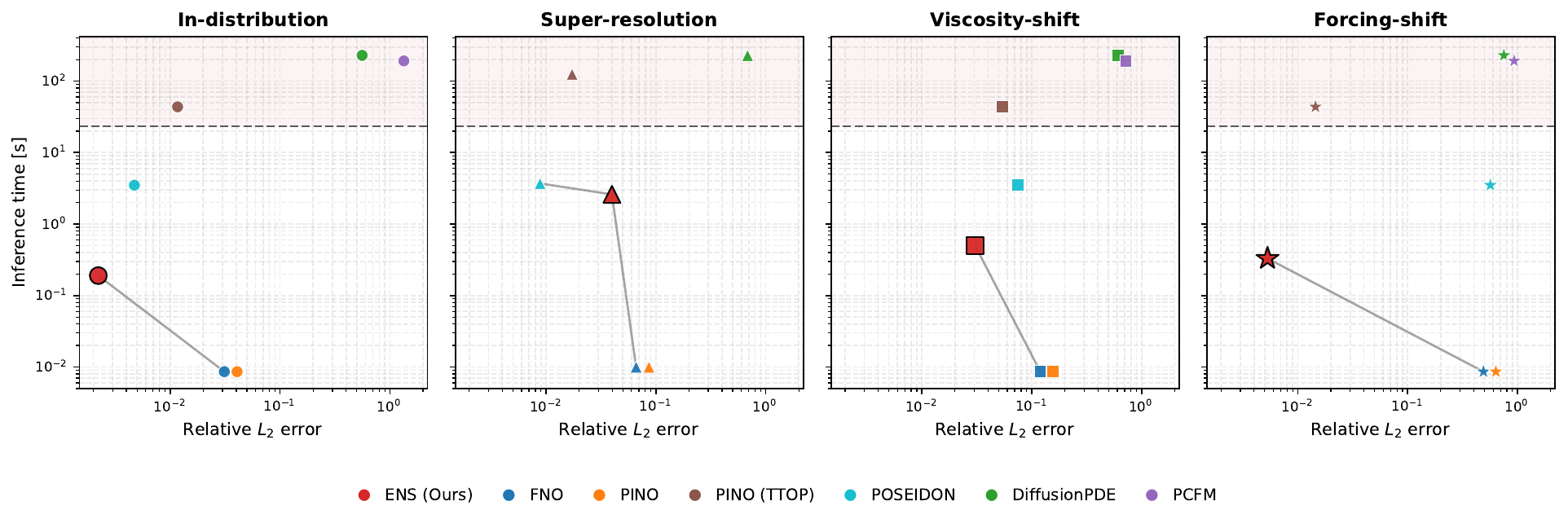}
    \caption{\textbf{Accuracy vs.\ inference cost on Navier--Stokes, per regime.} Lower-left is better; the gray curve in each panel traces that regime's Pareto frontier and the dashed line marks the classical solver ($\approx$23s/sample). ENS is the most accurate learned method on the frontier for in-distribution, viscosity-shift, and forcing-shift, while POSEIDON is more accurate on super-resolution at higher cost. ENS's inference time is 0.19s in-distribution, 0.5s under viscosity-shift, 0.33s under forcing-shift, and 2.6s for super-resolution, as the number of correction steps adapts to difficulty. The correction and diffusion hybrids (PINO-TTOP, DiffusionPDE, PCFM) lie above the solver line, i.e.\ slower than solving the PDE directly. FNO and PINO are offset horizontally by $\pm$12\% for visibility, as their accuracy is nearly identical.} 
    \label{fig:pareto}
\end{figure*}

\begin{figure*}[t]
  \centering
  \includegraphics[width=\textwidth]{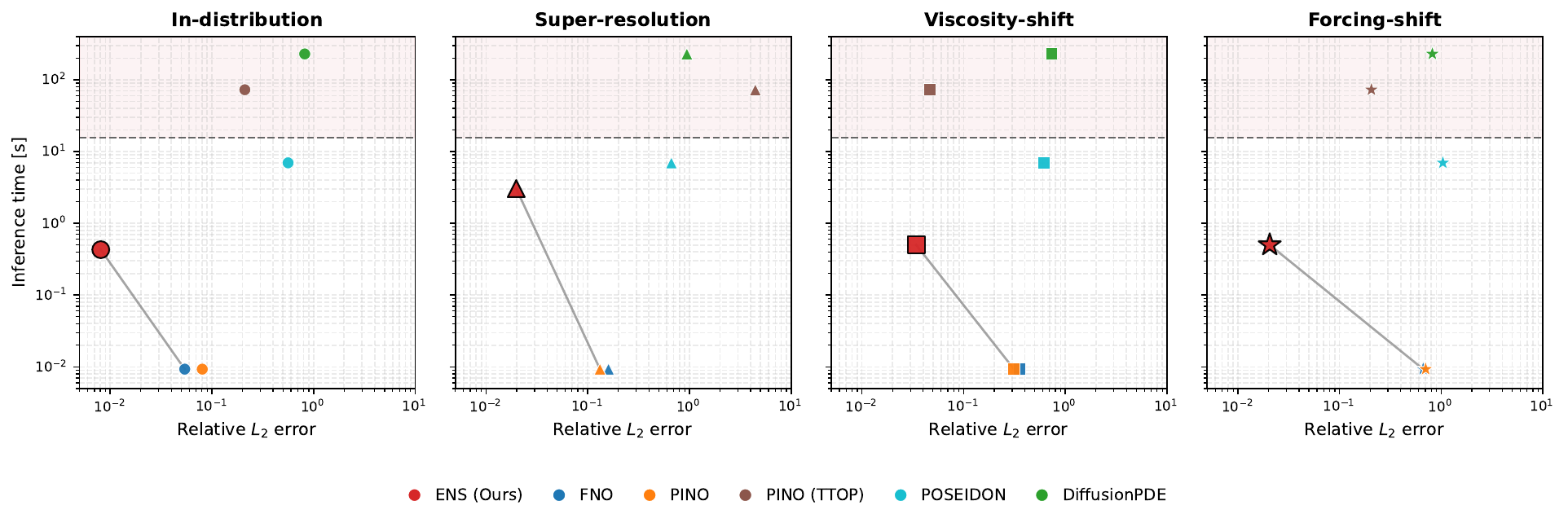}
  \caption{\textbf{Accuracy vs.\ inference cost on Kolmogorov flow, per regime.} Lower-left is better;
  the gray curve in each panel traces that regime's Pareto frontier and the dashed line marks the
  classical solver ($\approx$\textbf{15.5}\,s/sample). ENS is the most accurate learned method on the
  frontier in all four regimes, with inference time 0.43\,s in-distribution, 0.50\,s under viscosity-
  and forcing-shift, and 3.05\,s for super-resolution, as the number of correction steps adapts to
  difficulty. PINO-TTOP reaches low PDE residuals but high reconstruction error
  (Prop.~\ref{prop:residual_recon}), seen as its rightward position despite test-time optimization
  and most extreme under super-resolution ($L_2\approx4.5$). PCFM runs out of memory in the 40-frame
  setting and is omitted.}
  \label{fig:pareto_kf}
\end{figure*}

\begin{figure}[h]
    \centering
    \includegraphics[width=0.8\linewidth]{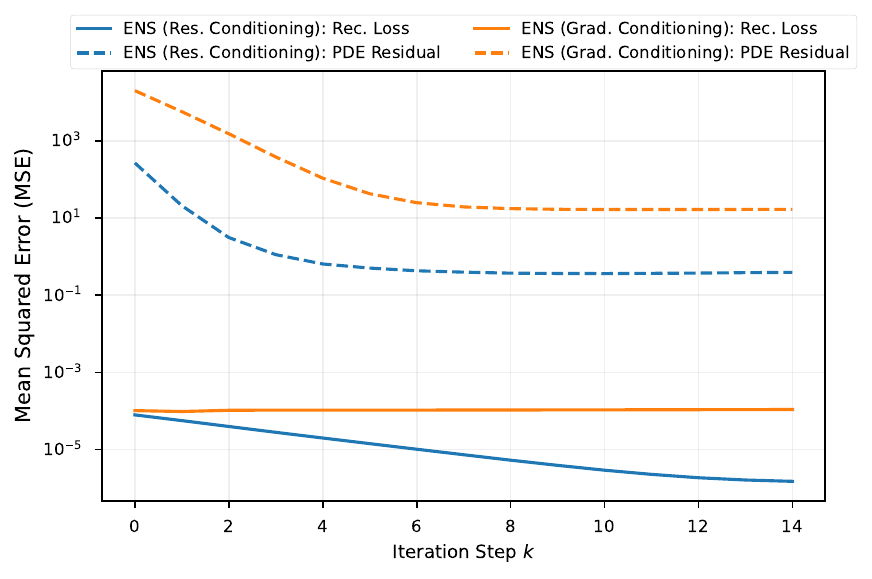}
    \caption{\textbf{Comparison of conditioning strategies for the ENS corrector}. We evaluate the optimization trajectories when conditioning on the PDE residual $\mathcal{R}(u_k)$ versus the physics-gradient $\nabla_{u_k}\mathcal{L}_{res}$. While gradient-conditioning accelerates the minimization of the PDE residual, it results in a stagnation of reconstruction loss. In contrast, residual-conditioning facilitates the simultaneous convergence of both PDE residual and reconstruction loss.}
    \label{fig: gradient}
\end{figure}


\end{document}